\documentclass[letterpaper]{article} 
\usepackage[]{aaai2026}  
\makeatletter
\gdef\copyright@on{}
\makeatother
\usepackage{times}  
\usepackage{helvet}  
\usepackage{courier}  
\usepackage[hyphens]{url}  
\usepackage{graphicx} 
\usepackage{natbib}  
\usepackage{caption} 
\title{Parallel Lifted Planning via Semi-Naive Datalog Evaluation}

\author{
    Dominik Drexler\textsuperscript{\rm 1},
    Oliver Joergensen\textsuperscript{\rm 2},
    Jendrik Seipp\textsuperscript{\rm 3},
}
\affiliations{
    Link{\"o}ping University, Sweden\\
    \textsuperscript{\rm 1}dominik.drexler@liu.se,
    \textsuperscript{\rm 2}oliver.harold.joergensen@liu.se,
    \textsuperscript{\rm 3}jendrik.seipp@liu.se,
}

\usepackage{algorithm}
\usepackage[indLines=false,commentColor=black]{algpseudocodex}
\usepackage{booktabs}
\usepackage{multirow,makecell}
\usepackage{amsmath,amssymb,amsthm}
\usepackage{multicol}
\usepackage{stmaryrd}
\usepackage{mathtools}
\usepackage{subcaption}
\usepackage{tikz}
\usetikzlibrary{shapes,arrows,arrows.meta}
\usetikzlibrary{automata}
\usetikzlibrary{positioning}
\usetikzlibrary{backgrounds}
\usetikzlibrary{patterns,patterns.meta,angles}
\usepackage{xspace}

\usepackage{courier}
\usetikzlibrary{graphs, quotes}
\usepackage{forest}
\usepackage{pgfplots}
\pgfplotsset{compat=1.18}
\usepackage[mode=buildnew]{standalone}
\usepackage{paralist}
\usepackage[acronym]{glossaries}

\usepackage[main=american]{babel}
\useshorthands*{"}
\defineshorthand{"-}{\babelhyphen{hard}}
\defineshorthand{"=}{\babelhyphen{–}}
\defineshorthand{"/}{\babelhyphen{/}}
\usepackage{stmaryrd}

\newcommand{\Omit}[1]{}

\newcommand{\defined}[1]{\emph{#1}}

\definecolor{tblue}{RGB}{0,119,187}
\definecolor{tcyan}{RGB}{51,187,238}
\definecolor{tteal}{RGB}{0,153,136}
\definecolor{torange}{RGB}{238,119,51}
\definecolor{tred}{RGB}{204,51,17}
\definecolor{tmagenta}{RGB}{238,51,119}
\definecolor{tgray}{RGB}{187,187,187}

\newtheorem{corollary}{Corollary}
\newtheorem{lemma}{Lemma}
\newtheorem{theorem}{Theorem}

\theoremstyle{definition}

\newtheorem{definition}{Definition}

\definecolor{darkgreen}{rgb}{0.0, 0.2, 0.13}
\iftrue
    
    \newcommand{\oliver}[1]{\textcolor{orange}{Oliver: #1}}
    \newcommand{\dominik}[1]{\textcolor{magenta}{Dominik: #1}}
    \newcommand{\jendrik}[1]{\textcolor{blue}{Jendrik: #1}}
    
\else
    
    \newcommand{\oliver}[1]{}
    \newcommand{\dominik}[1]{}
    \newcommand{\jendrik}[1]{}
    
\fi
\newcommand{\notes}[1]{}

\newcommand{\inlinecite}[1]{\citet{#1}}
\newcommand{\egcite}[1]{\citep[e.g.,][]{#1}}

\newcommand{\deltakpkc}{$\Delta$-KPKC}
\newcommand{\vdg}{\ensuremath{\mathcal{G}}}

\newcommand{\tup}[1]{\ensuremath{\langle #1 \rangle}}

\newcommand{\naturals}{\ensuremath{\mathbb{N}}}

\newcommand{\predicates}{\ensuremath{\mathcal{P}}}
\newcommand{\predicate}{\ensuremath{P}}
\newcommand{\atom}{\ensuremath{p}}
\newcommand{\groundatom}{\ensuremath{\overline{p}}}
\newcommand{\groundatoms}{\ensuremath{\overline{\predicate}}}
\newcommand{\literal}{\ensuremath{\ell}}
\newcommand{\groundliteral}{\ensuremath{\overline{\literal}}}

\newcommand{\actionschemas}{\ensuremath{\mathcal{A}}}
\newcommand{\actionschema}{\ensuremath{a}}
\newcommand{\groundaction}{\ensuremath{\overline{a}}}
\newcommand{\objects}{\ensuremath{\mathcal{O}}}
\newcommand{\object}{\ensuremath{o}}
\newcommand{\variables}{\ensuremath{\mathcal{X}}}
\newcommand{\variable}{\ensuremath{x}}
\newcommand{\initial}{\ensuremath{\state_0}}
\newcommand{\goal}{\ensuremath{\mathcal{G}}}
\newcommand{\arity}[1]{\ensuremath{\mathit{ar}(#1)}}
\newcommand{\element}{\ensuremath{y}}
\newcommand{\groundelement}{\ensuremath{\overline{y}}}

\newcommand{\substitution}{\ensuremath{\rho}}
\newcommand{\substitutions}{\ensuremath{\mathcal{R}}}
\newcommand{\ground}{\ensuremath{\text{gr}}}

\newcommand{\precondition}[1]{\ensuremath{\mathit{pre}(#1)}}
\newcommand{\effect}[1]{\ensuremath{\mathit{eff}(#1)}}
\newcommand{\pprecondition}[1]{\ensuremath{\mathit{pre}^+(#1)}}
\newcommand{\msprecondition}[1]{\ensuremath{\mathit{pre}_s^-(#1)}}
\newcommand{\mprecondition}[1]{\ensuremath{\mathit{pre}^-(#1)}}
\newcommand{\peffect}[1]{\ensuremath{\mathit{eff}^+(#1)}}
\newcommand{\meffect}[1]{\ensuremath{\mathit{eff}^-(#1)}}

\newcommand{\state}{\ensuremath{s}}

\newcommand{\consistencygraph}{\ensuremath{\mathcal{G}}}

\newcommand{\vertices}{\ensuremath{\mathcal{V}}}
\newcommand{\vertex}{\ensuremath{v}}

\newcommand{\edges}{\ensuremath{\mathcal{E}}}
\newcommand{\removaledges}{\ensuremath{\mathcal{I}}}
\newcommand{\clique}{\ensuremath{\mathcal{C}}}

\newcommand{\rank}{\ensuremath{\mathit{rank}}}

\newcommand{\dprogram}{\ensuremath{\mathcal{D}}}
\newcommand{\dfacts}{\ensuremath{\mathcal{F}}}
\newcommand{\dconsequence}{\ensuremath{\mathcal{T}}}

\newcommand{\ddelta}{\ensuremath{\Delta}}
\newcommand{\ddatabase}{\ensuremath{J}}
\newcommand{\drules}{\ensuremath{\mathcal{R}}}
\newcommand{\drule}{\ensuremath{r}}
\newcommand{\dgroundrule}{\ensuremath{\overline{r}}}

\newcommand{\dworker}{\ensuremath{w}}
\newcommand{\dmodel}{\ensuremath{\mathcal{M}}}
\newcommand{\dbody}{\ensuremath{\mathrm{B}}}
\newcommand{\dhead}{\ensuremath{\mathrm{H}}}
\newcommand{\diter}{\ensuremath{t}}

\algrenewcommand\algorithmicindent{1.0em}%

\usepackage{amsmath}
\usepackage{booktabs}
\usepackage{nicefrac}

\usepackage{tikz}
\usepackage{pgfplots}
\usepgfplotslibrary{statistics}
\pgfplotsset{compat=1.17}
\usetikzlibrary{patterns}
\usetikzlibrary{arrows}
\usetikzlibrary{arrows.meta}
\usetikzlibrary{graphs,graphs.standard,calc}
\usetikzlibrary{shapes,snakes,decorations}
\usetikzlibrary{positioning}
\usetikzlibrary{matrix}
\usetikzlibrary{fit}

\usepackage{graphicx}
\usepackage{multirow}
\usepackage{microtype}
\usepackage{tangocolors}

\begin{document}

\maketitle

\begin{abstract}

Lifted classical planners operate directly on first-order planning tasks to avoid the computationally demanding grounding step. However, lifted planning is typically slower, as planners must repeatedly instantiate ground structures during search. Many core components of lifted classical planning, such as successor generation, axiom evaluation, task grounding, and delete-relaxed heuristics, have previously been studied through the lens of Datalog evaluation. We build upon this line of work and extend it by developing and analyzing an execution model with two levels of parallelism: rule-level parallelism and grounding parallelism. We further specialize this solver for planning-specific workloads with a grounder based on clique enumeration, which we extend to support semi-naive Datalog evaluation.
Our experimental evaluation using greedy best-first search with the FF heuristic shows that our implementation already solves more tasks than the baselines on a single core, and the gap widens as additional cores are used. Moreover, on hard-to-ground tasks where on average 97.6\% of the total runtime is spent in Datalog execution, the proposed execution model exhibits an average parallel fraction of 92.4\%, while achieving up to a 6-fold speedup on 8 cores in practice.

\end{abstract}

\section{Introduction}

Classical planning systems commonly solve problems using a grounded representation of the planning task. While grounding enables efficient search and heuristic computation, it may become infeasible when the number of ground atoms and actions exceeds available memory. Lifted planners avoid this issue by operating directly on the first-order representation of the planning task. However, lifted reasoning introduces additional overhead, as planners repeatedly generate and evaluate ground structures during search.

Several components of lifted classical planning correspond to rule evaluation problems. In particular, successor generation \cite{correa-et-al-icaps2020}, axiom evaluation, task grounding \cite{helmert-aij2009}, and delete-relaxed planning graph heuristics \cite{correa-et-al-icaps2021,correa-et-al-aaai2022} can be expressed as Datalog rule evaluations over a database of facts. Previous work exploits these connections and shows that semi-naive Datalog evaluation \cite{bancilhon-ramakrishnan-icmd-1986,ceri-et-al-ieee1989} can be used within lifted planners to address difficult-to-ground tasks. However, these approaches evaluate rules sequentially and do not exploit inherent parallelism.

In this paper, we propose a \emph{parallel lifted planner} based on the connection between lifted planning and Datalog. Numerous parallel Datalog architectures have been proposed, with scalability varying considerably across techniques and benchmarks \egcite{jordan-et-al-iccav2016,shkapsky-et-al-sigmod2016}, making it challenging to identify designs that perform well for planning workloads, where general-purpose engines are often not directly applicable. Parallel Datalog systems can broadly be categorized into synchronous and asynchronous architectures: in synchronous evaluation, rule applications proceed in globally coordinated rounds with barriers, whereas asynchronous evaluation allows independent updates to shared relations. While synchronous architectures are less expressive, they are not necessarily less efficient in practice and can even outperform asynchronous ones due to more controlled communication patterns \cite{das-zaniolo-tplp2019}. We therefore adopt a synchronous semi-naive execution model with two complementary levels of parallelism: rule-level parallelism evaluates rules of a stratum concurrently, while grounding parallelism distributes the enumeration of rule instances across workers.

Our Datalog solver is specialized for classical planning using a grounder based on $k$-clique enumeration in $k$-partite substitution consistency graphs \cite{stahlberg-ecai2023}. In contrast to join-based database techniques \cite{abiteboul-et-al-1995}, this sidesteps the worst-case blowup associated with intermediate joins \cite{ngo-et-al-jacm2018} and makes grounding largely output-sensitive, up to polynomial bookkeeping overhead. Soundness and completeness hold for rules whose body literals have arity at most two; the remaining cases are handled by an explicit applicability test. We extend this approach to semi-naive evaluation by seeding clique enumeration at newly consistent edges \cite{wang-et-al-acmmd2024}. We evaluate on classical planning benchmarks using greedy best-first search with the FF heuristic \cite{hoffmann-nebel-jair2001}, achieving speedups of 2--4x and up to 6x on 8 cores.

The main contributions of this work are:

\begin{itemize}
    \item We propose \deltakpkc{}, a semi-naive Datalog solver for lifted classical planning based on $k$-clique enumeration in $k$-partite graphs.
    \item We develop a synchronous parallel execution architecture for \deltakpkc{} with two levels of parallelism: rule-level parallelism and grounding parallelism.
    \item We provide a detailed empirical analysis of the effectiveness and scalability of \deltakpkc{} on classical planning benchmarks, including average empirical speedup limits.
\end{itemize}

\section{Related Work}

\subsection{The Lifted FF Heuristic}

The FF heuristic \cite{hoffmann-nebel-jair2001} is one of the most widely used heuristics in classical planning. It explores the delete-relaxed planning graph over grounded actions, followed by backchaining from the goal, to retrieve informative heuristic estimates and preferred actions. \inlinecite{correa-et-al-aaai2022} investigated lifted variants of the FF heuristic to avoid full grounding of the planning task. Our work builds on this connection between lifted planning and Datalog evaluation. While previous approaches use semi-naive evaluation to derive lifted planning algorithms, they evaluate rules sequentially. In contrast, we exploit the inherent parallelism of rule evaluation by developing a parallel semi-naive execution architecture for heuristic search with the FF heuristic.

\subsection{Hash-Distributed Search}

Parallel planning has been widely studied using distributed search algorithms. Hash-distributed A* and hash-distributed greedy best-first search \cite{kishimoto-et-al-icaps2009,kishimoto-fukunaga-botea-aaai2013} distribute states across workers according to a hash function, enabling parallel exploration of the search space while minimizing duplicate state generation. These approaches parallelize the search process itself. In contrast, our work focuses on parallelizing the lifted reasoning required for successor generation and heuristic computation. To our knowledge, hash-distributed search has not yet been explored in the context of lifted planning. Our approach is therefore complementary to distributed search techniques and can be combined with them to further increase parallel scalability.

\subsection{Parallel Datalog Evaluation}

Efficient evaluation of Datalog programs has been extensively studied in the database and knowledge representation communities. Semi-naive evaluation \cite{bancilhon-ramakrishnan-icmd-1986} is the standard technique for avoiding redundant rule instantiations during bottom-up evaluation. A number of systems have explored parallel and distributed Datalog execution, including both shared-memory and distributed systems \egcite{jordan-et-al-iccav2016,shkapsky-et-al-sigmod2016}. However, high-performance Datalog evaluation often relies on workload-specific optimizations and evaluation strategies. As a result, general-purpose Datalog engines are rarely directly applicable to classical planning workloads. Instead, planning systems typically implement specialized rule evaluation mechanisms tailored to the structure of planning domains. Our work follows this approach by developing a specialized semi-naive Datalog solver for lifted classical planning that exploits parallelism in both rule evaluation and grounding.





\section{Background}

\subsection{Classical Planning}

A \defined{classical planning task} is a tuple $\tup{\predicates,\actionschemas,\objects,\initial,\goal}$.
The \defined{planning domain} comprises $\tup{\predicates,\actionschemas}$, and the \defined{task information} comprises $\tup{\objects,\initial,\goal}$, namely:

\paragraph{Predicates and Literals.} $\predicates$ is a set of predicate symbols.
Each \defined{predicate symbol} $\predicate$ in $\predicates$ has an associated arity $\arity{\predicate}$ in $\naturals$.
An \defined{atom} over a predicate $\predicate$ of arity $\arity{\predicate} = k$ is an expression $\predicate(\variable_1,\ldots,\variable_k)$ where $\variable_i$ with $i=1,\ldots,k$ are variables or constants.
A \defined{literal} is either an atom $\atom$ or its negation $\neg \atom$.

\paragraph{Objects and Substitutions.} $\objects$ is a set of objects (constants).
A \emph{substitution function} $\substitution$ over some set of variables $\variables(\substitution)$ and objects $\objects$ is a function $\substitution : \variables(\substitution)\rightarrow\objects$ mapping each variable in $\variables(\substitution)$ to an object in $\objects$. We write $\substitutions(\variables,\objects)$ for the set of all such $\substitution$. We write $\variable_1/\object_1,\ldots,\variable_n/\object_n$ for a substitution function $\substitution$ over variables $\{\variable_1,\ldots,\variable_n\}$ and objects $\objects$ such that $\substitution(\variable_i) = \object_i$ for all $i=1,\ldots,n$. We write $\substitution\subseteq\substitution'$ iff $\variables(\substitution)\subseteq\variables(\substitution')$ and $\substitution(\variable) = \substitution'(\variable)$ for all $\variable\in\variables(\substitution)$. The application of $\substitution$ to a syntactic element $\element$ (e.g., an atom, literal, or action), denoted by $\element[\substitution]$, replaces every occurrence of each variable $\variable_i$ with the corresponding object $\object_i = \substitution(\variable_i)$. The set of free variables occurring in an element $\element$ is denoted $\variables(\element)$, and the arity $\arity{\element}$ is defined as $|\variables(\element)|$. We say that $\element$ is \defined{ground}, denoted by $\groundelement$, iff $\variables(\element)=\emptyset$.

\paragraph{States.} $\initial$ is the \defined{initial state}. A \defined{state} $\state$ consists of a set of ground atoms $\groundatoms(\state)$ that hold in $\state$. Ground atoms not in $\groundatoms(\state)$ are false. A positive ground literal $\groundatom$ \defined{holds} in $\state$ iff
$\groundatom\in\groundatoms(\state)$, and a negative ground literal $\neg\groundatom$ \defined{holds} iff
$\groundatom\notin\groundatoms(\state)$.

\paragraph{Goals.} $\goal$ is the goal, which is a set of ground literals.

\paragraph{Actions and Successors.} $\actionschemas$ is a set of action schemas. Each \defined{action schema} $\actionschema$ in $\actionschemas$ consists of two sets $\precondition{\actionschema}$ and $\effect{\actionschema}$, where $\precondition{\actionschema}$ is a set of precondition literals and $\effect{\actionschema}$ is a set of effect literals. We write $\pprecondition{\actionschema}$, $\mprecondition{\actionschema}$, $\peffect{\actionschema}$, and $\meffect{\actionschema}$ for the sets of atoms of positive and negative literals in $\precondition{\actionschema}$ and $\effect{\actionschema}$. A ground action $\groundaction$ is \defined{applicable} in a state $\state$ iff every ground literal $\groundliteral$ in $\precondition{\groundaction}$ holds in $\state$. Applying a ground action $\groundaction$ in a state $\state$ where it is applicable yields the \defined{successor state} $\state'$ with $\groundatoms(\state') = (\groundatoms(\state)\setminus\meffect{\groundaction})\cup\peffect{\groundaction}$.

The objective for a given classical planning task is to find a \defined{plan}, which is a sequence of ground actions that, when successively applied starting from the initial state, yields a state in which all ground literals mentioned in the goal hold.

\subsection{Substitution Consistency Graph}

\inlinecite{stahlberg-ecai2023} proposed clique enumeration as an alternative to database techniques \cite{correa-et-al-icaps2021} for computing applicable ground actions in lifted classical planning. Given a state, the approach computes a polynomial-size factored overapproximation of the variable substitutions satisfying the action preconditions, enumerates this overapproximation, and filters the resulting candidates by a final applicability check. Its running time is output-sensitive in the size of the explicitly represented overapproximation. In contrast, database-based techniques may suffer from the join-explosion problem, constructing large intermediate relations even when the final set of applicable ground actions is small \cite{ngo-et-al-jacm2018}. On the vast majority of classical planning benchmarks, clique enumeration is sound and complete, i.e., the enumerated candidates coincide with the set of applicable ground actions. Before stating the necessary conditions, we define the core data structure of this approach, the substitution consistency graph.

The \defined{substitution consistency graph} $\consistencygraph_{\actionschema,\state}$ for an action $\actionschema$ in a state $\state$ is a tuple $\tup{\vertices(\consistencygraph_{\actionschema,\state}), \edges(\consistencygraph_{\actionschema,\state})}$ where
$\vertices(\consistencygraph_{\actionschema,\state}) = \{\variable/\object \mid \variable\in\variables(\actionschema), \object\in\objects \}$ is the set of vertices, each representing a possible variable substitution, and
$\edges(\consistencygraph_{\actionschema,\state}) = \vertices(\consistencygraph_{\actionschema,\state})\times \vertices(\consistencygraph_{\actionschema,\state})\setminus (\removaledges^{\neq}\cup \removaledges^+\cup \removaledges^-)$ is the set of edges, each representing all possible pairs of variable substitutions \cite{stahlberg-ecai2023}.
An atom $\predicate(\variable_1,\ldots,\variable_n)$ \defined{matches} an atom $\predicate'(\variable_1',\ldots,\variable_n')$ iff $\predicate = \predicate'$
and for all $i=1,\ldots,n$, $\variable_i$ or $\variable_i'$ are variables, or else $\variable_i = \variable_i'$, i.e., both are objects.
The sets $\removaledges^{\neq}, \removaledges^+, \removaledges^-$ are:

\begin{itemize}

\item $\removaledges^{\neq} = \{\{\variable/\object_1,\variable/\object_2\} \mid \variable\in\variables(\actionschema), \object_1,\object_2\in\objects, \object_1 \neq \object_2\}$,
i.e., the set of edges whose corresponding substitution function would assign different objects to the same variable,

\item $\removaledges^+ = \{\{\vertex,\vertex'\}\mid \vertex,\vertex'\in \vertices(\consistencygraph_{\actionschema,\state}),\exists \atom\in\pprecondition{\actionschema},\forall \atom'\in\state~.~\atom[\vertex,\vertex'] \text{ does not match } \atom'\}$,
i.e., the set of edges for which some positive precondition atom, after applying the corresponding substitution, does not match any ground atom in the state.

\item $\removaledges^- = \{\{\vertex,\vertex'\}\mid \vertex,\vertex'\in \vertices(\consistencygraph_{\actionschema,\state}),\exists \atom\in\mprecondition{\actionschema}~.~\atom[\vertex,\vertex']\in\state\}$,
i.e., the set of edges for which some negative precondition atom, after applying the corresponding substitution, yields a ground atom that is contained in the state.

\end{itemize}

The following theorem, from \inlinecite{stahlberg-ecai2023}, establishes the conditions for precisely identifying all applicable ground actions via $k$-clique enumeration.

\begin{theorem}

Consider an action $\actionschema$ with arity $k\geq 2$ where each literal $\literal\in\precondition{\actionschema}$ has arity at most $2$, and a state $\state$. Then, there exists a $k$-clique $\clique = \{\vertex_1,\ldots,\vertex_k\}$ in $\consistencygraph_{\actionschema,\state}$ iff all ground literals $\groundliteral = \literal[\vertex_1,\ldots,\vertex_k]$ with $\literal\in\precondition{\actionschema}$ hold in $\state$.

\label{thm:classical:sound_and_complete}

\end{theorem}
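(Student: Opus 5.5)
We prove the two directions separately. Throughout, we use that a $k$-clique in $\consistencygraph_{\actionschema,\state}$ necessarily contains exactly one vertex per variable. Edges in $\removaledges^{\neq}$ are removed, so two vertices $\variable/\object_1,\variable/\object_2$ with $\object_1\neq\object_2$ are never adjacent. There are $k$ variables, and a clique cannot contain the same vertex twice. Hence any $k$-clique $\clique=\{\vertex_1,\ldots,\vertex_k\}$ assigns each variable of $\actionschema$ exactly one object, and so defines a total substitution $\substitution_\clique$. Conversely, any total substitution $\substitution$ yields a vertex set $\clique_\substitution=\{\variable/\substitution(\variable)\mid \variable\in\variables(\actionschema)\}$ of size $k$, whose pairs never lie in $\removaledges^{\neq}$. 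So the statement reduces to the following claim: all pairs in $\clique_\substitution$ are edges iff every ground literal $\literal[\substitution]$ with $\literal\in\precondition{\actionschema}$ holds in $\state$.

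For the direction ``literals hold $\Rightarrow$ clique exists'', fix a substitution $\substitution$ under which all precondition literals hold, and take a pair $\vertex,\vertex'\in\clique_\substitution$. We show this pair is in neither $\removaledges^+$ nor $\removaledges^-$.
\begin{itemize}
\item For $\atom\in\pprecondition{\actionschema}$, the partial instantiation $\atom[\vertex,\vertex']$ is a generalization of the ground atom $\atom[\substitution]\in\groundatoms(\state)$. Every position of $\atom[\vertex,\vertex']$ is either still a variable or equal to the corresponding object of $\atom[\substitution]$, so $\atom[\vertex,\vertex']$ matches $\atom[\substitution]$. Hence the pair is not in $\removaledges^+$.
\item For $\atom\in\mprecondition{\actionschema}$, the condition $\atom[\vertex,\vertex']\in\state$ can only hold if $\atom[\vertex,\vertex']$ is ground. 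In that case it equals $\atom[\substitution]\notin\state$, so the pair is not in $\removaledges^-$.
\end{itemize}
Thus all pairs are edges and $\clique_\substitution$ is a $k$-clique. This direction does not need the arity bound.

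For the converse, assume $\clique$ is a $k$-clique and let $\substitution=\substitution_\clique$. Fix $\literal\in\precondition{\actionschema}$. Because $\arity{\literal}\le 2$, the variables of $\literal$ are covered by at most two vertices of $\clique$. Using $k\ge 2$, we can choose two distinct vertices $\vertex,\vertex'\in\clique$ covering $\variables(\literal)$, padding with an arbitrary vertex if $\literal$ has fewer than two variables. Then $\literal[\vertex,\vertex']=\literal[\substitution]$ is ground, and $\{\vertex,\vertex'\}$ is an edge.
\begin{itemize}
\item If $\literal=\atom$ is positive, then since the edge is not in $\removaledges^+$, the ground atom $\atom[\substitution]$ matches some $\atom'\in\state$. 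A ground atom matches another atom that is also ground only if they are equal, so $\atom[\substitution]\in\groundatoms(\state)$.
\item If $\literal=\neg\atom$, then since the edge is not in $\removaledges^-$, we get $\atom[\substitution]\notin\state$.
\end{itemize}
So every ground literal holds.

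The main obstacle is the bookkeeping around the informal definitions rather than any deep step. The definition of matching must be read so that a ground atom matches a state atom only when the two are identical. The edge set must be read as unordered pairs of distinct vertices, so that self-loops do not count. Degenerate literals of arity $0$ or $1$ must be covered, which is exactly where the padding vertex and the assumption $k\ge 2$ are used. I would state these conventions explicitly at the start of the proof and then carry out the two directions as above.
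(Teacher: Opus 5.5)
Your proof is correct. The paper gives no proof of this theorem; it imports it from Stahlberg (ECAI 2023). Your argument is the standard direct one from the definitions: $\mathcal{I}^{\neq}$ identifies $k$-cliques with total substitutions, a partial instantiation $p[v,v']$ of a satisfied precondition can never trigger $\mathcal{I}^+$ or $\mathcal{I}^-$, and the arity bound lets each literal be fully grounded on a single clique edge. Your padding step, which uses $k\ge 2$, is what covers arity-$0$ and arity-$1$ literals. In this paper's graph, which has no vertex-removal criterion, such literals are enforced only through edges.
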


Theorem~\ref{thm:classical:sound_and_complete} enables using $k$-clique algorithms to enumerate candidate applicable ground actions. If all literals in $\precondition{\actionschema}$ have arity at most $2$, every enumerated $k$-clique corresponds to a ground instance $\groundaction$ of $\actionschema$ that is applicable in $\state$. For actions containing literals of arity greater than $2$, we explicitly check whether those ground literals hold in $\state$. For actions of arity one, one can define a vertex removal criterion analogous to the edge removal criteria above. Moreover, actions of arity zero are already ground, and we can simply check their applicability, for instance using precondition match trees \cite{helmert-aij2009}.

\subsection{Datalog}

A \defined{Datalog program} (program) $\dprogram$ is a pair $\tup{\dfacts,\drules}$ where $\dfacts$ is a set of facts (ground atoms) and $\drules$ is a set of rules. Each rule $\drule$ in $\drules$ has the form $h\leftarrow b_1,\ldots,b_k$ where $k\geq 0$, $h$ is an atom (the ``head''), and $b_1,\ldots,b_k$ are literals (the ``body''). We write $\dhead(\drule)$ for the head and $\dbody(\drule)$ for the body of a rule $\drule$. Analogous to classical planning, a ground rule $\dgroundrule$ is \defined{applicable} in a fact set $\ddatabase$ if every literal $\groundliteral$ in $\dbody(\dgroundrule)$ holds in $\ddatabase$.

A program $\dprogram = \tup{\dfacts,\drules}$ with predicates $\predicates$ induces a \defined{predicate dependency graph} with a vertex for each predicate, and a directed edge $\tup{\predicate,\predicate'}$ if $\predicate$ occurs in the body and $\predicate'$ in the head of a rule $\drule \in \drules$, labeled $\top$ (resp.~$\bot$) if $\predicate$ occurs positively (resp.~negatively) in $\drule$.

A program is \defined{stratifiable} if its predicates can be partitioned into strata $\predicates_1,\ldots,\predicates_\ell$ such that for every edge $\tup{\predicate,\predicate'}$ in the dependency graph labeled $\top$ (resp.~$\bot$), we have $\predicate\in\predicates_i$ and $\predicate'\in\predicates_j$ with $i \le j$ (resp.~$i < j$). This stratification induces rule strata $\drules_1,\ldots,\drules_\ell$, where $\drule \in \drules_i$ if the predicate in $\dhead(\drule)$ belongs to $\predicates_i$.

The \defined{model} $\dmodel(\dprogram)$ of a stratifiable program $\dprogram$ is defined inductively over the rule strata $\drules_1,\ldots,\drules_\ell$. Let $\dfacts_0 \coloneqq \dfacts$. For each $i=1,\ldots,\ell$, we define the consequence operator
\[
\dconsequence_i(\ddatabase)
=
\dfacts_{i-1}
\;\cup\;
\{\, \dhead(\dgroundrule) \mid
\dgroundrule \in \ground(\drules_i, \ddatabase)
\,\},
\]
where $\ground(\drules_i, \ddatabase)$ is the set of all ground instances of rules in $\drules_i$ whose body is applicable in $\ddatabase$, i.e., $\ddatabase \vDash \dbody(\dgroundrule)$. Then $\dfacts_i$ is the least fixed point of $\dconsequence_i$, and $\dmodel(\dprogram) \coloneqq \dfacts_\ell$.

The least fixed point of $\dconsequence_i$ can be computed more efficiently using \defined{semi-naive bottom-up} evaluation \cite{bancilhon-ramakrishnan-icmd-1986}. For each stratum $i$, let $\ddatabase_i^0 \coloneqq \dfacts_{i-1}$ and $\ddelta_i^0 \coloneqq \dfacts_{i-1}$. For $\diter \ge 0$, define
\[
\ddelta_i^{\diter+1}
\coloneqq
\{\dhead(\dgroundrule) \mid
\dgroundrule \in \ground(\drules_i, \ddatabase_i^\diter),
\dbody(\dgroundrule) \cap \ddelta_i^\diter \neq \emptyset
\}
\setminus \ddatabase_i^\diter,
\]
and update $\ddatabase_i^{\diter+1} \coloneqq \ddatabase_i^\diter \cup \ddelta_i^{\diter+1}$.
Thus, in each iteration, only those rule instances are considered whose body contains at least one atom that was derived in the previous iteration. The evaluation terminates when $\ddelta_i^\diter = \emptyset$ for some $\diter$, and we set $\dfacts_i \coloneqq \ddatabase_i^\diter$.

\section{Datalog Compilations}

Many core components of lifted classical planning can be expressed as Datalog programs, including successor generation \cite{correa-et-al-icaps2020}, axiom evaluation, task grounding \cite{helmert-aij2009}, and delete-relaxed planning graph (RPG) heuristics \cite{correa-et-al-icaps2021,correa-et-al-aaai2022}. In this paper, we present Datalog compilations for applicable action generation and relaxed planning graph construction. Action generation underlies all state-space search methods, while RPG heuristics often provide strong search guidance. In the following, let $\tup{\predicates,\actionschemas,\objects,\initial,\goal}$ be a planning task and $\state$ a state.

\subsection{Applicable Action Program}

The \emph{applicable action program} is $\dprogram_\actionschemas = \tup{\dfacts,\drules_\actionschemas}$,
where $\dfacts$ is the state $\state$, and for each action schema $\actionschema \in \actionschemas$ with variables $\variables(\actionschema)$, positive precondition atoms $\pprecondition{\actionschema} = \{\atom_1,\ldots,\atom_m\}$, and negative precondition atoms $\mprecondition{\actionschema} = \{\atom_{m+1},\ldots,\atom_n\}$, there is a rule $\drule_\actionschema \in \drules_\actionschemas$ of the form
\[
\actionschema\text{-applicable}(\variables(\actionschema)) \leftarrow \atom_1,\ldots,\atom_m,\neg\atom_{m+1},\ldots,\neg\atom_n.
\]

Each ground instance of $\actionschema\text{-applicable}(\variables(\actionschema))$ in the model $\dmodel(\dprogram_\actionschemas)$
with substitution function $\substitution$ induces a ground action $\groundaction = \actionschema[\substitution]$ that is applicable in $\state$. Taking the union of these ground actions over all $\actionschema \in \actionschemas$ yields exactly the set of ground actions applicable in $\state$.


\subsection{Delete-Relaxed Planning Graph Program}

The \emph{RPG program} is $\dprogram_{\text{RPG}} = \tup{\dfacts,\drules_{\text{RPG}}}$,
where $\dfacts$ is the state $\state$, and for each action schema $\actionschema \in \actionschemas$ with variables $\variables(\actionschema)$, positive precondition atoms $\pprecondition{\actionschema} = \{\atom_1,\ldots,\atom_m\}$, and static negative precondition atoms $\msprecondition{\actionschema} = \{\atom_{m+1},\ldots,\atom_n\}$, $\drules_{\text{RPG}}$ contains, for each positive effect atom $\atom_e \in \peffect{\actionschema}$, the rule
\[
\atom_e \leftarrow \atom_1,\ldots,\atom_m,\neg\atom_{m+1},\ldots,\neg\atom_n.
\]

The restriction to static negative preconditions, i.e., atoms over predicates whose truth value never changes, is necessary to preserve monotonicity of the Datalog model. In contrast, this restriction is not needed for the applicable action program, since no rule head occurs in any rule body and semi-naive evaluation therefore terminates after one iteration. The cost of a ground atom in the head is defined by an aggregation function over the costs of the positive ground atoms in the body, e.g., summation for the additive heuristic or maximum for the max heuristic. For the FF heuristic, a relaxed plan is extracted by backward chaining from the goal atoms over supporting ground rule instances. The heuristic value is the number of distinct ground actions inducing these rule instances, and the preferred actions are those ground actions that are applicable in $\state$.

\section{Semi-naive Program Evaluation (\deltakpkc{})}

We now extend clique-based grounding from action schemas to semi-naive Datalog evaluation, where the main challenge is to enumerate exactly those rule instances whose bodies contain at least one newly derived fact.

The approach of \inlinecite{stahlberg-ecai2023} for generating ground applicable actions naturally extends to Datalog rules since the rule bodies are syntactically equivalent to action preconditions. However, semi-naive bottom-up evaluation requires enumerating each relevant $k$-clique exactly once within a monotonically growing fact set for each rule stratum. More precisely, for stratum $\drules_i$ and iteration $\diter$, we must identify all ground rules $\dgroundrule \in \ground(\drules_i,\ddatabase_i^\diter)$ with $\dbody(\dgroundrule)\cap\ddelta_i^\diter \neq \emptyset$, as required by the definition of $\ddelta_i^{\diter+1}$.

\begin{definition}[$\ddelta$-edges]
Consider a stratum $\drules_i$, two consecutive semi-naive iterations $\diter-1$ and $\diter$ with fact sets $\ddatabase_i^{\diter-1}$ and $\ddatabase_i^\diter$, and a rule $\drule\in\drules_i$. Let $\consistencygraph(\drule,\ddatabase_i^{\diter-1})$ and $\consistencygraph(\drule,\ddatabase_i^\diter)$ denote the substitution consistency graphs for rule $\drule$ under $\ddatabase_i^{\diter-1}$ and $\ddatabase_i^\diter$, respectively. The set of $\ddelta$-edges of $\drule$ in $\ddatabase_i^\diter$, is defined as
\[
\ddelta\edges_{i,\drule}^\diter
=
\edges(\consistencygraph(\drule,\ddatabase_i^\diter))
\setminus
\edges(\consistencygraph(\drule,\ddatabase_i^{\diter-1})).
\]
\end{definition}

$\ddelta$-edges arise from facts in $\ddelta_i^\diter$ that were added in the previous iteration. Hence, we can use $\ddelta$-edges to seed the $k$-clique enumeration. Since a clique may contain multiple $\ddelta$-edges, we must ensure that such duplicates are filtered out.

We adopt a branch-and-bound algorithm that combines an edge-oriented strategy \cite{wang-et-al-acmmd2024} with techniques for enumerating $k$-cliques in $k$-partite graphs \cite{phillips-et-al-algorithms2019}, building on the classical Bron-Kerbosch algorithm for clique enumeration \cite{bron-kerbosch-1973}. The main idea of edge-oriented strategies is to seed the clique-enumeration algorithm at an edge, thereby partially initializing the candidate clique with its incident vertices. A clique is reported only if its unique owner, defined as a canonical edge among the edges induced by the clique, coincides with the seeding edge \cite{wang-et-al-acmmd2024}. In our setting, we are interested in generating each relevant $k$-clique exactly once during semi-naive evaluation. Hence, we use the $\ddelta$-edges in $\ddelta\edges_{i,\drule}^\diter$ as seeds, since new edges may appear between two vertices that were already consistent. We define clique ownership for $\ddelta$-edges as follows.

\begin{definition}[Clique Owner]
Let $\rank$ be a fixed total order on edges. Consider a $k$-clique $\clique = \{\vertex_1,\ldots,\vertex_k\}$ with $k\geq 2$ that contains at least one edge from $\ddelta\edges_{i,\drule}^\diter$. The owner $\omega(\clique)$ is the edge $\tup{\vertex,\vertex'}\in \ddelta\edges_{i,\drule}^\diter$ with $\vertex,\vertex' \in \clique$ and minimum rank $\rank(\tup{\vertex,\vertex'})$.
\end{definition}

Alternative strategies avoid duplicate clique generation by imposing a canonical ordering on vertices \cite{wang-et-al-acmmd2024} or, in the $k$-partite case, on partitions. In contrast, our approach retains the flexibility to select pivot partitions dynamically, allowing the search space to be pruned during branch-and-bound by prioritizing partitions with the smallest candidate sets. The following lemma establishes the key property of $\ddelta$-edges, which guarantees that every clique corresponding to a ground rule with at least one newly supported body literal contains at least one $\ddelta$-edge.

\begin{lemma}[$\ddelta$-edge witness]
Let $\drule \in \drules_i$ and let $\bar r \in \ground(\{\drule\}, \ddatabase_i^\diter)$ be a ground rule such that $\dbody(\bar r)\cap \ddelta_i^\diter \neq \emptyset$.
Let $\clique(\bar r)$ be the corresponding $k$-clique in $\consistencygraph(\drule,\ddatabase_i^\diter)$. Then $\clique(\bar r)$ contains at least one edge from $\ddelta\edges_{i,\drule}^\diter$.
\label{lemma:delta-kpkc:edge-witness}
\end{lemma}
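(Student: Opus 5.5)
The argument is by contradiction, using monotonicity of the consistency graph construction in the fact set. Fix $\drule$, $\bar r$ and the clique $\clique(\bar r) = \{\vertex_1,\ldots,\vertex_k\}$. Assume no edge of $\clique(\bar r)$ lies in $\ddelta\edges_{i,\drule}^\diter$. Then every edge of $\clique(\bar r)$ lies in $\edges(\consistencygraph(\drule,\ddatabase_i^{\diter-1}))$, so $\clique(\bar r)$ is already a $k$-clique in the graph for the previous iteration. The goal is to derive from this that every body literal of $\bar r$ already held in $\ddatabase_i^{\diter-1}$. This contradicts $\dbody(\bar r)\cap\ddelta_i^\diter\neq\emptyset$, because $\ddelta_i^\diter\cap\ddatabase_i^{\diter-1}=\emptyset$ by the definition of $\ddelta_i^{\diter}$ as a set difference.

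\textbf{Step 1: pointwise edge characterization.} Membership of an edge $\{\vertex,\vertex'\}$ in the graph depends only on positive body atoms matching some fact and on negative body atoms not being instantiated to a fact. Within a stratum, negated predicates belong to strictly lower strata and are therefore frozen in $\dfacts_{i-1}\subseteq\ddatabase_i^{\diter-1}$. Hence the negative removal set $\removaledges^-$ is identical for both iterations. Since $\ddatabase_i^{\diter-1}\subseteq\ddatabase_i^\diter$, the positive removal set $\removaledges^+$ can only shrink. Edges therefore only appear between iterations, which makes the $\ddelta$-edges well behaved. Strictly, only the converse direction is needed: an edge present at $\diter-1$ certifies matching against $\ddatabase_i^{\diter-1}$.

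\textbf{Step 2: apply Theorem~\ref{thm:classical:sound_and_complete}.} Invoke the theorem, read with rule bodies in place of preconditions, for the fact set $\ddatabase_i^{\diter-1}$. It gives that every ground literal $\literal[\vertex_1,\ldots,\vertex_k]$ holds in $\ddatabase_i^{\diter-1}$. In particular, every positive body atom of $\bar r$ lies in $\ddatabase_i^{\diter-1}$, which yields the contradiction. In the arity-two setting, an equivalent direct argument works: each positive body atom $\atom$ has at most two variables, so $\atom[\bar r]$ is determined by one edge (or one vertex) of the clique. The existence of that edge at iteration $\diter-1$ means $\atom[\bar r]$ matches a fact of $\ddatabase_i^{\diter-1}$, and since the instance is fully ground, matching means equality.

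\textbf{Main obstacle: the boundary cases.} These are rules with $k<2$, which have no edges, and body atoms of arity above two, for which the theorem does not apply and the clique is only an overapproximation. A new fact of arity three, for example, could complete $\bar r$ without creating any new edge, since its pairwise projections may already have been supported by other facts. I would handle this by stating the lemma under the hypotheses of Theorem~\ref{thm:classical:sound_and_complete}: $k\ge 2$ and every body literal of arity at most two. Unary rules would be handled analogously through vertex removal and ``$\ddelta$-vertices'', and higher-arity atoms through the explicit applicability test mentioned after the theorem. With these hypotheses made explicit, the proof reduces to Steps 1 and 2.
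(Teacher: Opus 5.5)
Your proof is correct. Its main route differs from the paper's, and it is more careful about hypotheses. The paper argues directly: the new body fact in $\Delta_i^t$ determines ``the consistency edge corresponding to that newly available body fact,'' and that edge is present in the graph for $J_i^t$ but absent from the graph for $J_i^{t-1}$. You argue by contraposition instead:
\begin{enumerate}
\item If no clique edge were a $\Delta$-edge, the whole clique would already lie in the graph for $J_i^{t-1}$.
\item The soundness direction of Theorem~\ref{thm:classical:sound_and_complete}, applied at $J_i^{t-1}$, then puts every positive body atom of $\bar r$ in $J_i^{t-1}$.
\item This contradicts $\Delta_i^t\cap J_i^{t-1}=\emptyset$.
\end{enumerate}
The paper's local argument identifies the new edge explicitly. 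Your global one reuses an established result, which makes its hypotheses visible.

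Your ``direct argument'' in Step~2 is essentially the paper's proof with the missing justification supplied. The relevant edge is absent at $t-1$ because the body atom becomes ground under the two substitutions of that edge, so ``matches'' reduces to equality. For unary or nullary atoms, any clique edge serves as that edge, and such an edge exists because $k\ge 2$.

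This also exposes what the paper uses only tacitly. The lemma as stated carries no arity restriction, yet ``the consistency edge corresponding to'' a fact only makes sense for body atoms of arity at most two. Your arity-three scenario, in which the pairwise projections of the new fact are already supported by other facts, is a genuine counterexample to the unrestricted statement. The needed hypotheses appear explicitly only in Theorem~\ref{thm:delta-kpkc:correctness}, where the lemma is applied, so restricting the lemma to them is the right fix.

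Two minor points:
\begin{itemize}
\item Step~1 is unnecessary. That every clique edge already lies in the old graph follows immediately from the set-difference definition of $\Delta$-edges.
\item Like the paper, you need a convention such as $J_i^{-1}=\emptyset$ for $t=0$. Otherwise your disjointness claim $\Delta_i^t\cap J_i^{t-1}=\emptyset$ is undefined in that case.
\end{itemize}
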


\begin{proof}
Since $\dbody(\bar r)\cap \ddelta_i^\diter \neq \emptyset$, some body literal of $\bar r$ corresponds to a fact in $\ddelta_i^\diter$, and is therefore present in $\ddatabase_i^\diter$ but not in $\ddatabase_i^{\diter-1}$. The clique $\clique(\bar r)$ encodes the ground substitution induced by the body of $\bar r$ in the substitution consistency graph. Hence, the consistency edge corresponding to that newly available body fact is present in $\consistencygraph(\drule,\ddatabase_i^\diter)$ but absent from $\consistencygraph(\drule,\ddatabase_i^{\diter-1})$. Consequently, $\clique(\bar r)$ contains an edge in
\[
\edges(\consistencygraph(\drule,\ddatabase_i^\diter))
\setminus
\edges(\consistencygraph(\drule,\ddatabase_i^{\diter-1}))
=
\ddelta\edges_{i,\drule}^\diter,
\]
i.e., at least one $\ddelta$-edge.
\end{proof}

\begin{theorem}[Correctness of $\ddelta$-edge anchored clique enumeration]
Let $\drules_i$ be a rule stratum and let $\diter \ge 0$. For each rule $\drule \in \drules_i$ with arity at least $2$, where each literal in $\dbody(\drule)$ has arity at most $2$, enumerate all $k$-cliques in $\consistencygraph(\drule,\ddatabase_i^\diter)$ that are seeded at some edge in $\ddelta\edges_{i,\drule}^\diter$, and report a clique only if its owner coincides with the seeding edge. Let $H_i^\diter$ be the set of heads of the corresponding ground rules. Then
\[
H_i^\diter
=
\{\dhead(\overline{r}) \mid
\overline{r} \in \ground(\drules_i, \ddatabase_i^\diter),
\dbody(\overline{r}) \cap \ddelta_i^\diter \neq \emptyset
\}.
\]
Consequently,
$\ddelta_i^{\diter+1}
=
H_i^\diter \setminus \ddatabase_i^\diter.$
\label{thm:delta-kpkc:correctness}
\end{theorem}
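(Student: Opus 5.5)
We prove the set equality by two inclusions, each time working rule by rule: for each $\drule\in\drules_i$ we compare the set $C_\drule$ of reported cliques with the set of ground instances $\dgroundrule\in\ground(\{\drule\},\ddatabase_i^\diter)$ satisfying $\dbody(\dgroundrule)\cap\ddelta_i^\diter\neq\emptyset$. The basic tool is Theorem~\ref{thm:classical:sound_and_complete}, applied with the fact set $\ddatabase_i^\diter$ in place of the state $\state$. Its hypotheses hold: the rule has arity $k\geq 2$ and every body literal has arity at most $2$. It then gives a bijection between $k$-cliques of $\consistencygraph(\drule,\ddatabase_i^\diter)$ and ground instances of $\drule$ whose body holds in $\ddatabase_i^\diter$. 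Because each partition of the graph corresponds to one variable, a $k$-clique picks exactly one object per variable, so the clique determines the substitution and therefore the ground rule and its head.

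\textbf{Completeness ($\supseteq$).} Take $\dgroundrule$ with body holding in $\ddatabase_i^\diter$ and $\dbody(\dgroundrule)\cap\ddelta_i^\diter\neq\emptyset$, and let $\clique(\dgroundrule)$ be its clique. Lemma~\ref{lemma:delta-kpkc:edge-witness} says $\clique(\dgroundrule)$ contains at least one $\ddelta$-edge, so its owner $\omega(\clique(\dgroundrule))$ is defined. Seeding at that owner edge $e$ initializes the candidate clique with both endpoints of $e$. We then need the branch-and-bound enumeration, seeded at $e$, to be exhaustive over the $k$-cliques of the graph containing $e$. This holds because Bron--Kerbosch-style $k$-partite enumeration only prunes candidates that are not adjacent to all chosen vertices, and such candidates cannot lie in any clique. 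Since the seed equals the owner, the clique is reported, and its head lies in $H_i^\diter$.

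\textbf{Soundness ($\subseteq$).} Take a reported clique $\clique$ seeded at a $\ddelta$-edge. By Theorem~\ref{thm:classical:sound_and_complete} it yields a ground rule $\dgroundrule$ whose body holds in $\ddatabase_i^\diter$. It remains to show $\dbody(\dgroundrule)\cap\ddelta_i^\diter\neq\emptyset$; this is the converse of the witness lemma and the step I expect to be the main obstacle. Suppose instead that every body atom of $\dgroundrule$ lies in $\ddatabase_i^{\diter-1}$. The plan is to show that then every edge of $\clique$ is already an edge of $\consistencygraph(\drule,\ddatabase_i^{\diter-1})$, contradicting that the seed is a $\ddelta$-edge. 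For positive literals, each edge of $\clique$ is consistent with the matching ground atom already present at iteration $\diter-1$. Two points need care here:
\begin{itemize}
\item Negative literals: within a stratum, negated predicates belong to lower strata, so they are fixed during the fixpoint computation and cannot create or destroy edges.
\item Partial matching: an edge depends on the existence of some matching atom for literals that are only partially instantiated. I would use the atom that $\dgroundrule$ itself supplies, which is in $\ddatabase_i^{\diter-1}$, as the witness of the match.
\end{itemize}
The boundary case $\diter=0$ is immediate from the convention $\ddelta_i^0=\ddatabase_i^0$, so every edge and every fact counts as new.

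\textbf{Uniqueness and conclusion.} Uniqueness of reporting follows because $\rank$ is a total order, so each clique has exactly one owner; duplicates therefore do not affect the set $H_i^\diter$ but justify the exactly-once claim. Taking the union over $\drule\in\drules_i$ gives the stated equality. Subtracting $\ddatabase_i^\diter$ from both sides and comparing with the definition of $\ddelta_i^{\diter+1}$ yields $\ddelta_i^{\diter+1}=H_i^\diter\setminus\ddatabase_i^\diter$.
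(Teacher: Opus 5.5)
Your proposal is correct and follows the paper's proof: the same soundness/completeness/uniqueness structure, Lemma~\ref{lemma:delta-kpkc:edge-witness} for completeness, and the total order $\rank$ for unique ownership. Your soundness step is more rigorous than the paper's, which only asserts that a $\ddelta$-edge in the clique forces $\dbody(\bar r)\cap\ddelta_i^{\diter}\neq\emptyset$; your contrapositive proves it, since the atoms of $\bar r$ witness every partial match at iteration $\diter-1$ and negative literals cannot add edges, and your choice to seed explicitly at the owner edge tightens the paper's completeness wording.
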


\begin{proof}
We show soundness, completeness, and uniqueness.

\textbf{Soundness.}
Consider a $k$-clique $\clique$ enumerated by the algorithm. By construction, $\clique$ is a clique in $\consistencygraph(\drule,\ddatabase_i^\diter)$, hence it corresponds to a consistent substitution satisfying all literals of $\dbody(\drule)$ in $\ddatabase_i^\diter$. Therefore, the corresponding ground rule $\bar r$ belongs to $\ground(\{\drule\},\ddatabase_i^\diter)$. Moreover, the clique enumeration is seeded at an edge in $\ddelta\edges_{i,\drule}^\diter$, so $\clique$ contains at least one $\ddelta$-edge. This implies that at least one body literal of $\bar r$ is supported by a fact in $\ddelta_i^\diter$, i.e.,
\[
\dbody(\bar r)\cap \ddelta_i^\diter \neq \emptyset.
\]
Thus $\dhead(\bar r)$ belongs to the set on the right-hand side of the equality.

\textbf{Completeness.}
Let $\bar r \in \ground(\{\drule\},\ddatabase_i^\diter)$ be a ground rule such that
\[
\dbody(\bar r)\cap \ddelta_i^\diter \neq \emptyset.
\]
By Lemma~\ref{lemma:delta-kpkc:edge-witness}, the corresponding clique $\clique(\bar r)$ contains at least one edge in $\ddelta\edges_{i,\drule}^\diter$. Therefore, the algorithm will consider $\clique(\bar r)$ when seeding the enumeration from that edge. Consequently, the clique corresponding to $\bar r$ will be generated.

\textbf{Uniqueness.}
A clique may contain multiple $\ddelta$-edges and could therefore be discovered multiple times. However, a clique is reported only if its owner edge coincides with the seeding edge. Since $\rank$ defines a total order on edges, every clique has a unique owner edge. Hence, each clique is reported exactly once.

Combining soundness, completeness, and uniqueness, $H_i^\diter$ equals
\[
\{\dhead(\overline{r}) \mid
\overline{r} \in \ground(\drules_i, \ddatabase_i^\diter),
\dbody(\overline{r}) \cap \ddelta_i^\diter \neq \emptyset
\}.
\]
By the definition of semi-naive evaluation, $\ddelta_i^{\diter+1}$ is obtained by removing already known facts, yielding
\[
\ddelta_i^{\diter+1} = H_i^\diter \setminus \ddatabase_i^\diter.\qedhere
\]
\end{proof}

For rules containing body literals of arity greater than $2$, these literals must be verified explicitly for each candidate ground instance. If a candidate is inapplicable due to such a higher-arity literal, it must be queued and rechecked in every subsequent iteration, since $\ddelta$-edge seeding provides no guarantee of regenerating it.

\Omit{
\subsection{Time and Space Efficient Edge Updates}

In this section, we present structural exploitation on the consistency graph, allowing us to speed up its construction for a given state,
while also compressing its memory representation. The variable dependency graph of an action schema $\actionschema$ is a graph $\vdg_{\actionschema} = (\vertices(\vdg_{\actionschema}), \edges(\vdg_{\actionschema}))$ consisting of
\begin{itemize}
    \item $\vertices(\vdg_{\actionschema}) = \{\variable_i\mid 1\leq i\leq\arity{\actionschema}\}$
    \item $\edges(\vdg_{\actionschema}) = \{\{\variable_i,\variable_j\}\mid \variable_i,\variable_j\in \vertices(\vdg_{\actionschema})\land \exists p\in\precondition{\actionschema}\text{ that mentions } \variable_i \text{ and } \variable_j \}$
\end{itemize}

\begin{lemma}
    Consider $\variable$ and $\variable'$ in $\vertices(\vdg_{\actionschema})$ with $\{\variable,\variable'\}\notin \edges(\vdg_{\actionschema})$.
    If $\variable/\object$ and $\variable'/\object'$ in $\vertices(\consistencygraph_{\actionschema,\state})$ (vertex-consistent),
    then it follows that $\{\variable/\object,\variable'/\object'\}\in \edges(\consistencygraph_{\actionschema,\state})$ (edge-consistent).
    \label{lemma:cc_edge_check_simplification}
\end{lemma}

\begin{proof}
    Assume $\variable/\object,\variable'/\object'\in \vertices(\consistencygraph_{\actionschema,\state})$ (vertex-consistent),
    and there is no edge between $\variable$ and $\variable'$ in $\vdg_\actionschema$.
    We must show that $\{\variable/\object,\variable'/\object'\}$ does not make it into any edge exclusion criteria
    in the definition of the substitution consistency graph:
    \begin{itemize}
        \item $\{\variable/\object,\variable'/\object'\}\notin\mathcal{I}^{\neq}$
        because $\variable\neq \variable'$,
        \item $\{\variable/\object,\variable'/\object'\}\notin\mathcal{I}^+$
        because there does not exist a $\atom\in\pprecondition{\actionschema}$
        such that $\forall \atom'\in\state$, $\atom[\variable/\object,\variable'/\object']$ does not match $\atom'$, and
        \item $\{\variable/\object,\variable'/\object'\}\notin\mathcal{I}^-$
        because there does not exist a $\atom\in\mprecondition{\actionschema}$
        such that $\groundatom[\variable/\object,\variable'/\object']\in\state$.
    \end{itemize}
    Consequently it follows that $\{\variable/\object,\variable'/\object'\}\in\edges(\consistencygraph_{\actionschema,\state})$ (edge-consistent).
\end{proof}

Lemma~\ref{lemma:cc_edge_check_simplification} allows us to check edge consistency purely based on vertex consistency,
which is much faster since the number of vertices between two partitions is $O(|\objects|^2)$ while the number of vertices in a partition is $O(|\objects|)$.
The following Corollary~\ref{lemma:cc_edge_check_redundance} reveals that we can derive adjacency structure from previous nodes $\variable/\object\in\vertices(\consistencygraph_{\actionschema,\state})$.

\begin{corollary}
    Consider $\variable$ and $\variable'$ in $\vertices(\vdg_{\actionschema})$ with $\{\variable,\variable'\}\notin\edges(\vdg_\actionschema)$,
    and two vertices $\variable/\object$ and $\variable/\object'$ in $\vertices(\consistencygraph_{\actionschema,\state})$ (vertex-consistent).
    For all $\variable'/\object''$ in $\vertices(\consistencygraph_{\actionschema,\state})$ (vertex-consistent) it holds that
    \[
    \{\variable/\object,\variable'/\object''\}\in \edges(\consistencygraph_{\actionschema,\state})
    \iff
    \{\variable/\object',\variable'/\object''\}\in \edges(\consistencygraph_{\actionschema,\state}).
    \]
    \label{lemma:cc_edge_check_redundance}
\end{corollary}

\begin{proof}
    The claim follows immediately from Lemma~\ref{lemma:cc_edge_check_simplification}.
\end{proof}

\subsection{Exploiting Database Techniques}

The edge update in the previous section can be mapped into relational database operations involving at most two variables at a time.
In contrast to general database techniques, this strictly avoids intermediate blowups in the interpretation of relations,
at the expense of encoding only literals of arity at most two exactly, while approximating higher arity literals.

We can compute the consistent edges via database projection to binary relation, followed by combinations of intersection, union, and difference on binary relations.
Consider a set of relations $\mathcal{R}$ each $R$ in $\mathcal{R}$ has an associated arity $\arity{R}$, and a universe $\objects$.
The interpretation of a relation $R$ in $\mathcal{R}$ is $R^I\subseteq \objects^{\arity{R}}$.

\begin{definition}[Projection]
Let $R \in \mathcal{R}$ be a relation with arity $\arity{R}$ and interpretation
$R^I \subseteq \objects^{\arity{R}}$.
Let $I = (i_1,\dots,i_k)$ be a tuple of distinct argument positions with
$i_\ell \in \{1,\dots,\arity{R}\}$ for all $\ell \in \{1,\dots,k\}$.
The projection of $R^I$ onto the positions $I$, denoted $R[I]$, is defined as
\[
R[I]
=
\{
(\object_{i_1},\dots,\object_{i_k})
\mid
(\object_1,\dots,\object_{\arity{R}}) \in R^I
\}.
\]

\end{definition}

\begin{definition}[Intersection]
Let $R,S \in \mathcal{R}$ with interpretations
$R^I, S^I \subseteq \objects^k$ for some $k\in\mathbb{N}$.
The intersection of $R^I$ and $S^I$ is
\[
R^I \cap S^I
=
\{ \mathbf{\object} \in \objects^k \mid \mathbf{\object} \in R^I \land \mathbf{\object} \in S^I \}.
\]
\end{definition}

\begin{definition}[Union]
Let $R,S \in \mathcal{R}$ with interpretations
$R^I, S^I \subseteq \objects^k$ for some $k\in\mathbb{N}$.
The union of $R^I$ and $S^I$ is
\[
R^I \cup S^I
=
\{ \mathbf{\object} \in \objects^k \mid \mathbf{\object} \in R^I \lor \mathbf{\object} \in S^I \}.
\]
\end{definition}

\begin{definition}[Difference]
Let $R,S \in \mathcal{R}$ with interpretations
$R^I, S^I \subseteq \objects^k$ for some $k\in\mathbb{N}$.
The difference of $R^I$ and $S^I$ is
\[
R^I \setminus S^I
=
\{ \mathbf{\object} \in \objects^k \mid \mathbf{\object} \in R^I \land \mathbf{\object} \notin S^I \}.
\]
\end{definition}

\begin{definition}[Edge Consistency Relation]
Let $\actionschema$ be an action schema with variable dependency graph $\vdg_{\actionschema}$,
and let $\mathcal{R}^+$ and $\mathcal{R}^-$ be the sets of relations appearing in
$\pprecondition{\actionschema}$ and $\mprecondition{\actionschema}$, respectively.
For variables $\variable,\variable' \in \vertices(\vdg_{\actionschema})$, define
$\mathcal{R}_{\variable,\variable'}^{\circ} =\{ R \in \mathcal{R}^{\circ} \mid R \text{ mentions } \variable \text{ and } \variable' \}$ for $\circ\in\{+,-\}$.
The edge consistency relation between $\variable$ and $\variable'$ is
\[
\mathit{ER}(\variable,\variable')
=
\left(
\bigcap_{R \in \mathcal{R}_{\variable,\variable'}^+} R[\variable,\variable']
\right)
\setminus
\left(
\bigcup_{R \in \mathcal{R}_{\variable,\variable'}^-} R[\variable,\variable']
\right),
\]
where the intersection over an empty set equals $\objects\times\objects$,
and the union over an empty set equals $\emptyset$.
\end{definition}

\begin{theorem}
    Consider $\variable/\object$ and $\variable'/\object'$ in $\vertices(\consistencygraph_{\actionschema,\state})$.
    Then, $\{\variable/\object,\variable'/\object'\}\in\edges(\consistencygraph_{\actionschema,\state})$
    iff $(\object,\object')\in\mathit{ER}(\variable,\variable')$.
\end{theorem}

\begin{proof}
    ($\Rightarrow$)
    Assume $\{\variable/\object,\variable'/\object'\}\in\edges(\consistencygraph_{\actionschema,\state})$.
    By definition of the consistency graph, none of the exclusion criteria
    $\mathcal I^{\neq}$, $\mathcal I^+$, or $\mathcal I^-$ applies.

    Thus, for every positive precondition atom mentioning both $\variable$ and $\variable'$,
    there exists a matching ground atom in $\state$, implying $(\object,\object')\in R[\variable,\variable']$
    for the corresponding relation $R$.
    Similarly, no negative precondition excludes $(\object,\object')$.
    Therefore, $(\object,\object')$ belongs to every projected relation, and hence
    $(\object,\object')\in\mathit{ER}(\variable,\variable')$.

    ($\Leftarrow$)
    Assume $(\object,\object')\in\mathit{ER}(\variable,\variable')$.
    Then $(\object,\object')$ satisfies all projected relations corresponding to precondition
    atoms mentioning both $\variable$ and $\variable'$.
    Since $\variable/\object$ and $\variable'/\object'$ are vertex-consistent, none of the exclusion criteria
    $\mathcal I^{\neq}$, $\mathcal I^+$, or $\mathcal I^-$ applies.
    Thus $\{\variable/\object,\variable'/\object'\}\in\edges(\consistencygraph_{\actionschema,\state})$.
\end{proof}
}

\section{Parallelized Architecture}

\begin{figure*}
\centering
\begin{tikzpicture}[
    >=stealth,
    node distance=0.5cm and 0.5cm,
    data/.style={
        draw,
        rounded corners=6pt,
        minimum width=2.4cm,
        minimum height=1cm,
        align=center,
        thick
    },
    operation/.style={
        draw,
        rectangle,
        minimum width=2.2cm,
        minimum height=1.05cm,
        align=center,
        thick
    },
    parallel/.style={
        draw,
        dashed,
        inner sep=7pt,
        rounded corners=5pt
    },
    arrow/.style={->, thick},
    label/.style={font=\small, inner sep=2pt}
]

\node[data] (Fi) {$\ddatabase_i^t$};

\node[operation, right=of Fi] (r2) {rule $\drule_2$};
\node[operation, above=of r2] (r1) {rule $\drule_1$};
\node[operation, below=of r2] (rk) {rule $\drule_n$};

\node[parallel, fit=(r1)(rk)] (rulepar) {};
\node[label, above=2pt of rulepar.north] {L1: Rule parallelism};

\draw[arrow] (Fi) -- (r1.west);
\draw[arrow] (Fi) -- (r2.west);
\draw[arrow] (Fi) -- (rk.west);

\node[operation, right=1cm of r1] (w1) {worker $\dworker_{1,1}$};
\node[operation, below=of w1] (w2) {worker $\dworker_{1,2}$};
\node[operation, below=of w2] (wk) {worker $\dworker_{1,{m_t}}$};

\node[parallel, fit=(w1)(wk)] (workerpar) {};
\node[label, above=2pt of workerpar.north] {L2: Grounding parallelism};

\draw[arrow] (r1.east) -- (w1.west);
\draw[arrow] (r1.east) -- (w2.west);
\draw[arrow] (r1.east) -- (wk.west);

\node[data, right=of w1] (g1) {${\ground(\{\drule_1\}, \ddatabase_i^t)}_1$};
\node[data, right=of w2] (g2) {${\ground(\{\drule_1\}, \ddatabase_i^t)}_2$};
\node[data, right=of wk] (gk) {${\ground(\{\drule_1\}, \ddatabase_i^t)}_{m_t}$};

\draw[arrow] (w1) -- (g1);
\draw[arrow] (w2) -- (g2);
\draw[arrow] (wk) -- (gk);

\node[operation, right=0.6cm of g2] (merge) {synchronization};

\node[data, right=0.6cm of merge] (Fnext) {$\ddatabase_i^{t+1}$};

\draw[arrow] (g1.east) -- (merge);
\draw[arrow] (g2.east) -- (merge);
\draw[arrow] (gk.east) -- (merge);

\draw[arrow] (merge) -- (Fnext);


\end{tikzpicture}
\caption{Synchronous execution architecture with two-level parallelization. Level~L1 denotes rule-level parallelism, where rules are evaluated concurrently,
and Level~L2 denotes grounding parallelism, where workers ground instances of a rule in parallel. Let $\dprogram = \tup{\dfacts,\drules}$ be a Datalog program with rule strata $\drules_1,\ldots,\drules_\ell$. During semi-naive evaluation, rule stratum $\drules_i = \{\drule_1,\ldots,\drule_n\}$ is processed with current fact set $\ddatabase_i^t$. For each rule $\drule \in \drules_i$, a rule task grounds all newly applicable instances $\ground(\{\drule\}, \ddatabase_i^t)$. Each rule task $r$ creates $m_t \ge 1$ worker tasks that ground disjoint subsets, i.e., $\ground(\{\drule\}, \ddatabase_i^t) = {\ground(\{\drule\}, \ddatabase_i^t)}_1 \uplus \cdots \uplus {\ground(\{\drule\}, \ddatabase_i^t)_{m_t}}$. Finally, a synchronization phase merges the heads of all derived ground rules into the fact set $\ddatabase_i^t$ to obtain $\ddatabase_i^{t+1}$.
}
\label{fig:architecture}
\end{figure*}

The $\ddelta$-edge anchored enumeration procedure from the previous section provides the rule-level work units needed for semi-naive evaluation; we now show how these work units are scheduled in a synchronous two-level parallel architecture. Figure~\ref{fig:architecture} shows the synchronous execution architecture with two-level parallelization where the coordination of facts happens after each iteration. Level~L1 is the primary driver of parallelism, evaluating rules concurrently. Depending on the structure of the input program, the per-rule execution time may vary greatly, or the number of rules may be too small, limiting parallel scaling. Therefore, level~L2 parallelizes the grounding phase of each rule by distributing $\ddelta$-edges among worker tasks, where each worker emits only those cliques that it owns according to the seeding edge. Choosing the number of workers $m_\diter$ is the responsibility of the load balancer, which aims to minimize the sequential execution time of iteration $\diter$. To reduce contention in memory allocation, all data structures used during search are stored in geometrically growing flat byte arrays that serialize dynamically sized objects.

\section{Experiments}

We implemented our planner, Tyr, in C++ with Python bindings. It uses the Loki PDDL parser and normalizer \cite{drexler-zenodo2026}, which implements the PDDL normalization described in Section~4 of \inlinecite{helmert-aij2009}. We will make Tyr available online. We use $N \in \{1,2,4,8\}$ rule-level workers, denote the corresponding configuration by Tyr-$N$, and use $m_\diter = 1$ grounding worker by default. For $N = 8$, we also consider the configuration Tyr-8-2, which uses $m_\diter = 2$ grounding workers whenever more than 1024 $\ddelta$-edges are generated, distributing them equally in round-robin fashion. More sophisticated load-balancing techniques are beyond the scope of this paper. As a lifted baseline, we consider the Powerlifted (PL) planner \cite{correa-et-al-aaai2022}, and as a ground baseline, we consider the Fast Downward (FD) planner \cite{helmert-jair2006}. All planners use greedy best-first search with the $h_{\text{FF}}$ heuristic and alternate between preferred and non-preferred queues \cite{richter-helmert-icaps2009}. We increase the preferred-queue weight by 1000 whenever the current best heuristic estimate improves, and decrease it by 1 after each node expansion. We evaluate all planners on the Autoscale Agile (AS) and Hard-To-Ground (HTG) benchmark sets, using a time limit of 10\,minutes and a memory limit of 16\,GiB.

\subsection{Speedup Analysis}

Parallelizing Datalog evaluation can improve planner performance only if two conditions are met: the sequential implementation must incur little avoidable overhead, and a sufficiently large fraction of the remaining runtime must lie in parallelizable phases. We therefore first analyze Tyr's runtime structure before comparing its end-to-end performance against the baselines. This analysis isolates the components affected by rule-level and grounding-level parallelism and derives empirical speedup limits from the observed time distribution.

We focus on the HTG benchmark set, whose tasks are challenging for ground planners and exhibit substantial Datalog execution times, making them the primary target for parallelization. In particular, we address the following central questions:

\begin{itemize}
    \item[Q1] How much of the total time is effectively spent in the Datalog execution of our planner?
    \item[Q2] Is our single-core implementation sufficiently efficient to motivate its parallelization?
    \item[Q3] What are the limits of the speedup given our empirical data?
    \item[Q4] What are the current weaknesses of our approach and how can we address them?
\end{itemize}

\subsubsection{Datalog Fraction.} The Datalog fraction, denoted by $\delta_D$, is defined as $T_D / T_\text{tot}$, where $T_D$ is the total wall-clock time spent in Datalog execution and $T_\text{tot}$ is the total wall-clock time of the planner execution.

\begin{figure}[ht]
\begin{tikzpicture}
  \begin{axis}[
    ybar interval,
    xlabel={$\delta_D$},
    ylabel={S},
    ylabel style={rotate=90},
    width=\linewidth,
    height=4cm,
    ymin=0,
    ytick distance=50,
  ]
    \addplot[
      hist={bins=10, data min=0, data max=1}, thick,
      fill=black!25,
      draw=black,
      line width=0.3pt,
    ] table[y index=0] {
      data
    0.9702148281263726
    0.969510605207308
    0.976685511842544
    0.973049543499727
    0.9688439405745926
    0.9731651084464837
    0.9787595063575907
    0.9764269483756355
    0.9805635413509196
    0.9765972638304672
    0.9806492867035554
    0.9757474149191698
    0.9816132639715063
    0.9780440885851809
    0.9787355790600616
    0.982222741143879
    0.9771140270000517
    0.980696740457291
    0.9795953595088953
    0.9538118408496011
    0.9587238144334729
    0.9556496469407021
    0.9581825759423034
    0.9621183243018867
    0.9629011334283155
    0.962650756374254
    0.9690793557129092
    0.9712964600918607
    0.9705640753235114
    0.23275240388035465
    0.22858156309647373
    0.2519704375157242
    0.24002853846595038
    0.22246608230086487
    0.2615457758945935
    0.22759887691367958
    0.21552028713562837
    0.2688502469213443
    0.6820489877055063
    0.25648056883040793
    0.23044190767937056
    0.2184988693459092
    0.2947865123833793
    0.26132617974139627
    0.300989667696882
    0.32289646506284125
    0.35208721067606447
    0.34740597144386764
    0.33970711636903744
    0.3304797940461565
    0.34310374259151294
    0.3474139013690852
    0.3539936518208284
    0.33972687741855995
    0.3423287141102971
    0.3544772768006553
    0.36181774334529254
    0.3314287732747404
    0.3453255880244078
    0.36980045109190224
    0.33932961062497
    0.32419940551276916
    0.3282095615389478
    0.3080898407017723
    0.32434285462633883
    0.3227750004950169
    0.3523489458241468
    0.3787984247213634
    0.3524702985009616
    0.3630175305186342
    0.36915962756264964
    0.3915227236362605
    0.3748352379280699
    0.3620625041083106
    0.3680376135645712
    0.93177154952582
    0.9295790165743033
    0.9428455949713694
    0.9308994855915982
    0.9386361936962715
    0.9285704601359577
    0.9102063326690709
    0.9336583643043089
    0.9232738144744921
    0.9344117277855769
    0.9474971450034293
    0.9424350822087643
    0.9536623155794092
    0.9322117945145371
    0.9394523714372482
    0.9517748254355941
    0.9539921910875157
    0.9295958433949996
    0.9228591954852082
    0.930455904293547
    0.9444216463496675
    0.9254449711655323
    0.9510380677345153
    0.9231788121172525
    0.9264266880248195
    0.9479435400329392
    0.9409235258374938
    0.9531425805752706
    0.90908982370281
    0.9428496818501138
    0.9321875220874414
    0.9353947640082539
    0.9258799695917903
    0.9388109593466211
    0.9298081150652203
    0.9606440497065345
    0.9040745616480246
    0.9562163082514705
    0.9740414370195548
    0.981089015683296
    0.9867719265267398
    0.9691932369421308
    0.974485987466541
    0.9833186832527475
    0.9686431076877354
    0.987971425193179
    0.9818347031554875
    0.9835517274268286
    0.9798397027223715
    0.9861304451753815
    0.9836444786868036
    0.9821407719224898
    0.9848504726332872
    0.9853677151739353
    0.9817947637607201
    0.9872034920114137
    0.9847653244876114
    0.9768664164736084
    0.9770215470896103
    0.987551348082728
    0.9862544267996237
    0.9854766943786759
    0.9876714368763126
    0.9859610469204738
    0.9873361250447439
    0.9881712099343973
    0.9873844475348413
    0.9780125162844163
    0.9917901398771967
    0.9907941256289352
    0.9392158713032501
    0.9804312928359861
    0.9760643798497077
    0.9687154076757324
    0.9757029982003389
    0.9823202859179767
    0.9781663604534554
    0.9718076351936048
    0.9797994742109999
    0.9854800105600475
    0.9818571605398129
    0.9855000790574903
    0.9916476278519757
    0.9755462193721459
    0.9805664574918582
    0.9851363398212085
    0.980813348286979
    0.9679101223852473
    0.9684532945929252
    0.9841585326081789
    0.9686919934220858
    0.9821188345003481
    0.9838254930528334
    0.963744947961114
    0.9806706732126617
    0.9663222011140361
    0.9882728021298258
    0.9887502671110595
    0.9889823543602101
    0.984312335299733
    0.9779705849610695
    0.9827159497793957
    0.9718818588875289
    0.9839263728783167
    0.9857449865811148
    0.9895644128426792
    0.9825470495894628
    0.9815397707767956
    0.9767838983026421
    0.9687588494348902
    0.9826951835839843
    0.9827543244020359
    0.9839539294185414
    0.979647082722975
    0.9878843313747244
    0.9760883945657669
    0.9731830510904453
    0.9729876065372605
    0.9843511131536801
    0.9833003808280946
    0.9854542107443746
    0.9935481899165529
    0.9780697525531462
    0.9750083844208945
    0.975597533908778
    0.9746448208236699
    0.9781050569946439
    0.9866165190339979
    0.9737885299587155
    0.9583551625136527
    0.9474727280908816
    0.9854248817955981
    0.9874703069784653
    0.9858394093363069
    0.9843279497468673
    0.9818235394492499
    0.9904551745905538
    0.9847597703467734
    0.9555141485459935
    0.9527706259468497
    0.9385148387799581
    0.9500838793570782
    0.9405475051837893
    0.9517614215903472
    0.9351749181411658
    0.9522575808478813
    0.9553567608745221
    0.9241005445388263
    0.9466898403506182
    0.9594884258217969
    0.9828256586122949
    0.9034100093052296
    0.9684435796212525
    0.9615887842385195
    0.9992741294132411
    0.95090602086508
    0.9954982944060983
    0.9969499894768362
    0.9964743488561632
    0.9967222571161314
    0.998707925557997
    0.9968794108287353
    0.996701171756793
    0.9976532459243314
    0.9977864062102315
    0.9966484053814907
    0.9995005270897335
    0.9990129761416965
    0.9967823447468879
    0.9979521231544216
    0.9985127234295493
    0.9996228434906946
    0.9990157000640215
    0.905146109568233
    0.9705368928857685
    0.9651254619227222
    0.969880045906768
    0.8982159853359585
    0.9569082123257429
    0.9691971112346784
    0.8988437203978183
    0.9099351151271471
    0.973224196821004
    0.9807119697528467
    0.9246218102340088
    0.9106363806504207
    0.9473246308518353
    0.8048840047319289
    0.9464537810487382
    0.9502790066665637
    0.9663966613503263
    0.7632853815782267
    0.9264146818695406
    0.9621523170454097
    0.9722774056828738
    0.9303547909832601
    0.9472125515990962
    0.9677575868654603
    0.76639358252888
    0.9244836170903057
    0.9453237107037511
    0.9756183188229972
    0.7865672355975036
    0.9609783139292807
    0.9755403019924573
    0.9781014104815814
    0.991506720690877
    0.9250936371672585
    0.9984251385122912
    0.998493497904662
    0.9989263110437852
    0.9989088879422937
    0.9967628443466188
    0.9968795194860051
    0.9985275812578129
    0.9984799276172416
    0.9988657737595883
    0.9929186158815814
    0.9961342702310693
    0.9979251707339862
    0.9983853749800302
    0.9832732485505518
    0.9876172521157819
    0.990476140359711
    0.991317827263359
    0.9976724107262825
    0.9987323026280517
    0.994785231778367
    0.9968180810153745
    0.9927066316195375
    0.9967356755957008
    0.9847239173441177
    0.9851460164389017
    0.9855844123502954
    0.9973938832784387
    0.9972866777293496
    0.9974995348897475
    0.9974986732799653
    0.9962716370786961
    0.9974981408173274
    };
  \end{axis}
\end{tikzpicture}
  \caption{The number of solved tasks (S) grouped by their total wallclock time spent in the Datalog execution ($\delta_D$).}
  \label{fig:histogram}
\end{figure}

Figure~\ref{fig:histogram} shows the distribution of solved tasks grouped by $\delta_D$ for tasks whose total wall-clock time is at least 6 seconds (1\% of the time limit). Among the 315 such tasks, 45 have $\delta_D \in [0, 0.5)$ and 270 have $\delta_D \in [0.5, 1]$. Roughly 250 tasks fall into the interval $\delta_D \in [0.9, 1]$. A large Datalog fraction suggests substantial potential for parallelization. For instance, if $\delta_D = 0.9$, Amdahl's law bounds the maximum achievable speedup by $1 / (1 - 0.9) = 10$. Even with perfect parallelization of the Datalog component, the achievable speedup is therefore limited by the remaining sequential fraction. Before analyzing these speedup limits in more detail, we first evaluate the efficiency of the single-core implementation to justify parallelization.

\subsection{Search Time Per Expanded Node}

\input{scatterplot-search-time-ms-per-expanded.tex}

To assess whether our single-core implementation is sufficiently efficient, Figure~\ref{fig:efficiency} compares the search time per expanded node of PL and Tyr-1. For tasks with $\delta_D\in[0.5,1]$, the picture is largely balanced: neither configuration dominates, and performance differences are task-dependent. This indicates that the single-core implementation is sufficiently efficient to motivate parallelization.

\subsection{Empirical Speedup Limits}

Given empirical data from the single-core implementation and measurements separating strictly sequential and parallelizable execution phases, we now provide average empirical speedup limits. Table~\ref{tab:distribution} shows the fraction of total wall-clock time spent in the Action ($\delta_\text{Action}$) and FF ($\delta_\text{FF}$) Datalog evaluation phases. Across all tasks ($\delta_D\in[0,1]$), the Action component accounts for only 4.6\% of the total wall-clock time, while FF accounts for 83.1\%. For $\delta_D\in[0.5,1]$, where parallelization is feasible, FF reaches 97.6\%. Moreover, the parallelizable fraction within the FF phase, i.e., the share spent in rule evaluation as opposed to the sequential merge phase ($\delta_\text{FF}^\text{inter}$), is 94.7\%. Consequently, the overall parallel fraction for rule parallelization is $0.976 \times 0.947 \approx 0.924$, since only Datalog execution is parallelized. According to Amdahl's law, the maximum speedup is therefore bounded by $1/(1-0.924)\approx 13$. The intra-phase parallel fraction ($\delta_\text{FF}^\text{intra}$) of 77.2\% further bounds intra-rule speedups through load balancing by a factor of $1/(1-0.772)\approx 4.39$, which would be required to attain average empirical speedup limits of $13$.

\setlength{\tabcolsep}{3pt}

\begin{table}[ht]
  \centering
  \begin{tabular}{l r @{\hspace{0.5cm}} rrr rrr}
    \toprule
    & & \multicolumn{3}{c}{Action} 
      & \multicolumn{3}{c}{FF} \\
      \cmidrule(l){3-5}
      \cmidrule(l){6-8}
    $\delta_D$ 
    & N 
    & $\delta_\text{phase}^\text{seq}$ 
    & $\delta_\text{phase}^\text{inter}$ 
    & $\delta_\text{phase}^\text{intra}$  
    & $\delta_\text{phase}^\text{seq}$ 
    & $\delta_\text{phase}^\text{inter}$ 
    & $\delta_\text{phase}^\text{intra}$ \\        
    \midrule
    $[0,0.5)$ & 45  & 0.264 & 0.543 & 0.999 & 0.068 & 0.999 & 0.999   \\
    $[0.5,1]$ & 270 & 0.004 & 0.779 & 0.407 & 0.976 & 0.947 & 0.772   \\
    $[0,1]$   & 315 & 0.046 & 0.559 & 0.944 & 0.831 & 0.949 & 0.775   \\
    \bottomrule
  \end{tabular}
  \caption{Fraction of total wall-clock time spent in each execution phase, defined as $\delta_{\text{phase}} = \frac{\sum_i T_{\text{phase}, i}}{\sum_i T_{\text{total}, i}}$, where the sums range over all tasks in the respective $\delta_D$ interval.}
  \label{tab:distribution}
\end{table}

\subsection{Observed Empirical Speedups}

\begin{figure}[ht]
\begin{tikzpicture}
\begin{axis}[
    height=2.3in,
    legend cell align=left, 
    title={Rule Skew vs.~Relative Speedup (1 vs 8 cores)}, 
    title style={
        xshift=1.5cm
    },
    width=2.3in,  
    xlabel={Rule Skew}, 
    xmin=1,
    xmax=1000, 
    xtickten={0,1,2,3},
    xmode=log, 
    ylabel={Relative Speedup},
    ymin=0,
    ymax=8, 
    legend pos=south east,
    legend style={
        at={(1.05,1)},
        anchor=north west,
        font=\tiny,
        align=left,
    },
    label style={font=\scriptsize},
    tick label style={font=\scriptsize},
]

\addplot[only marks,color=butter3,mark=+] coordinates {
(73.15, 1.115060213405559) (27.68, 0.8862195053049898) (58.04, 1.2684104812722854) (45.42, 1.0813899271714824) (30.91, 1.0865501615829263) (43.14, 0.7508361353034063) (45.35, 1.0839701004812146) (55.72, 0.991966990531172) (39.71, 1.0429979030335927) (41.78, 0.9921543259769369) (38.59, 0.9911486895560986) (50.35, 0.9965212534309263) (45.81, 1.258713187417213) (54.64, 1.0536080724768824) (49.09, 0.8698255305702771) (45.84, 1.0054980160020883) (36.33, 1.088567872361323) (51.42, 1.0577827780055449) (36.6, 0.9166435204315532) (58.9, 1.072130699365932) (59.95, 1.1113713585499703) (47.84, 1.306513145504328) (64.98, 1.3195975998759288) (119.65, 0.7016366017273277) (102.47, 0.7993055648995242) (21.46, 1.037593154801589) (39.98, 0.9704732077717463) (47.07, 1.0089959953330938) (48.34, 0.9169630509596572) 
};
\addlegendentry{blocksworld-large}

\addplot[only marks,color=orange3,mark=+] coordinates {
(2.4, 2.0363251749101856) (2.63, 2.124796575578972) (2.31, 5.588853273368229) (1.84, 1.0569960700186307) (1.75, 1.3346695004629625) (2.89, 0.9513699050202781) (3.14, 1.805331370050506) (2.34, 0.7979906461348365) (2.93, 0.892275088119487) (1.65, 1.5178695742619208) (3.9, 2.4807430314533727) (2.08, 1.0979858644918283) (3.54, 1.8459300258281917) (5.36, 2.336778806977256) (3.24, 1.469363421164984) (2.79, 2.4819123394508087) (1.77, 0.9080979040702231) (1.84, 0.4349265373865476) (2.88, 0.5531704058147365) (3.63, 2.1818040689263305) (1.88, 1.1244087432753498) (1.72, 1.8517602707347) (2.72, 1.73897873923067) (4.67, 1.5031547696597123) (3.14, 3.315798694048829) (4.78, 0.9993551031380891) (2.89, 3.204819628495066) (4.41, 3.1932872520641618) (1.92, 1.1093910939830125) (1.99, 1.0606500648504231) (4.88, 3.4429650400978327) (3.61, 1.5898835708273662) (4.36, 5.640581380589353) (5.41, 2.640298118535547) (2.88, 1.5394543912580985) (2.89, 1.185586208245325) (5.02, 2.5598745377201526) (4.08, 4.226679596558493) (1.82, 1.7627590985273969) (4.05, 1.8663783882512177) (5.14, 1.4661967706062207) (3.29, 4.241677831523453) (4.68, 2.8131112554849524) (3.71, 0.998391372359149) (4.31, 1.3083972488701283) (3.72, 2.525856780501219) (3.7, 0.17495255158612721) (3.55, 1.9553903054239257) (2.17, 1.5129644081340528) (1.84, 1.0238155476123554) (3.32, 1.7606649903548524) (7.49, 3.4738376384447087) (2.01, 1.7654893483932244) (5.02, 2.250276631162938) (5.11, 2.8323864393523523) (4.94, 6.725603585342234) (3.1, 1.5821787753394538) (3.43, 1.7354228823520457) (2.76, 1.0511565914891041) (4.95, 4.526919309826343) (2.01, 0.40603682784730133) (3.94, 2.057498683250021) (1.85, 0.7019136831488043) (6.34, 1.958947920451349) (6.14, 1.0317729300210936) (2.21, 4.68138529386664) (3.57, 2.00390637660083) (3.78, 1.9112195359138917) (3.28, 1.7945853624664732) (3.01, 1.839414493893701) (8.11, 4.405463186847249) (4.45, 1.9434430264617304) (3.76, 3.129712464788501) (2.8, 1.314514868477164) (4.75, 5.853382900902106) (1.82, 0.31807095910607247) (4.01, 1.4183372113400936) (1.78, 0.7854082728059248) (3.88, 2.0345602398054465) (2.82, 1.061245506078483) (1.85, 0.8958298977790918) (1.68, 0.5851787001357925) (1.72, 1.9188579094748106) (5.22, 12.317332549307972) (1.76, 0.8186536699397335) (4.68, 2.4014764744546673) (2.85, 2.3619117438999626) (3.23, 1.0971792470739021) (5.81, 1.0449897091998375) (1.83, 1.0538411120216589) (2.76, 1.8658796962017323) (3.1, 1.2931148008390407) (2.83, 2.2599944932820444) (1.85, 1.0649414390568086) (4.78, 1.958761830300334) (2.01, 0.584634158143667) (3.59, 2.4605714967444166) (3.46, 1.5204717569208601) (3.27, 2.8122486510261395) (3.41, 1.593189178886724) (4.53, 2.070934671742104) (3.49, 2.0010336391185857) (2.15, 1.757242921283165) (5.15, 3.086871500591922) (2.8, 2.197786364524088) (1.96, 1.7434661201207264) (2.15, 1.2054912349628915) (4.98, 5.066631581666449) (4.46, 2.722603584079433) (4.31, 1.7761219195143236) (5.0, 2.5176175601844553) (3.1, 2.6477951857921322) (2.75, 1.5806584144737277) (1.97, 0.907037522811418) (3.22, 2.388875201698875) (3.75, 1.6113575517447254) (3.94, 2.348191170317388) (1.67, 0.3968659740383568) (4.84, 1.8863570388897504) (4.04, 1.7899169559224597) (1.93, 1.6549958268297797) (1.81, 0.4782779022869175) (1.92, 1.523141621509809) (3.11, 1.5205668598048703) (2.58, 1.2583987862078965) (3.05, 2.380275653855313) (2.01, 1.9333542274475253) (3.08, 1.6043210702711752) (4.66, 3.305801479595844) (1.8, 2.19933863959474) (1.77, 1.0804561746909807) (3.62, 1.8418200151410533) (2.32, 1.729950863168833) 
};
\addlegendentry{genome-edit-distance}

\addplot[only marks,color=chocolate3,mark=+] coordinates {
(4.31, 1.3990774130208177) (4.59, 2.42909213199757) (4.75, 4.731900775745981) (11.98, 3.6042728151730983) 
};
\addlegendentry{labyrinth}

\addplot[only marks,color=chameleon3,mark=+] coordinates {
(357.04, 0.9482499073967605) (391.9, 0.995605115543755) (458.82, 0.975359484708951) (459.14, 0.8844601887108582) (342.08, 0.9924733785389105) (619.97, 1.0388875240283757) (532.65, 0.9370981469534772) (390.9, 1.0460040467829812) (441.62, 1.1239440424855327) (269.23, 1.0216653131101654) (445.32, 1.0260015990094677) (319.85, 0.9072804339343585) 
};
\addlegendentry{logistics-large}

\addplot[only marks,color=skyblue3,mark=+] coordinates {
(189.72, 4.826098181532395) (942.46, 4.941102724293341) (5420.15, 6.268158186148097) (9.33, 2.1384064394662814) (76.63, 4.128534445486763) 
};
\addlegendentry{organic-synthesis}

\addplot[only marks,color=plum3,mark=+] coordinates {
(3.93, 4.28394331808538) (6.98, 5.4536799257687285) (11.1, 1.8884359362045682) (7.12, 2.7294740391223784) (9.45, 1.8025232752146279) (7.19, 1.2678339997999872) (5.18, 3.7377854042820684) (13.65, 3.3189750903871014) (7.51, 3.534888178204117) (25.26, 3.531868959654933) (4.1, 1.7883322644332245) (3.0, 4.811077832038509) (7.92, 3.5702649772520823) (3.26, 6.156950389924489) (5.3, 3.7449406783222456) (4.05, 3.113061081288673) (4.97, 5.631863389699412) 
};
\addlegendentry{pipesworld-tankage-nosplit}

\addplot[only marks,color=scarletred3,mark=+] coordinates {
(59.14, 2.270942452067752) (65.7, 2.584889732703973) (45.38, 4.688890829316331) (270.94, 1.879437108087128) (156.77, 3.573306819538464) (124.26, 2.172598871151839) (161.94, 2.440205134398647) (152.41, 4.701807698566255) (85.14, 4.104367420469573) (261.1, 2.752804778252842) (28.15, 2.3752249725086085) (77.67, 2.9453836905709005) (70.03, 2.5513303442664057) (228.7, 2.6517021755797257) (251.92, 3.1691295675846236) (162.77, 2.3196988461481935) (91.37, 2.459044611741554) (157.89, 5.167259069852104) (207.36, 2.3624190066520594) (110.48, 2.3700592308632333) (116.07, 2.5719760077604485) (42.25, 4.176820466972517) (171.4, 2.3597715468341907) (91.72, 3.6955629490604567) (82.75, 5.601708776387027) (279.7, 2.4828674015607195) (97.11, 2.172286948364947) (210.29, 2.068452582146869) (40.6, 4.073934668730949) (183.26, 2.2807133850833474) (149.66, 2.302150990509508) (96.01, 2.5879182827541856) (151.18, 5.509986175375968) 
};
\addlegendentry{rovers-large-simple}

\addplot[only marks,color=aluminium3,mark=+] coordinates {
(1.02, 4.557407830860494) (1.03, 6.902076169203852) (1.03, 5.443992928771765) (1.04, 5.054509649001276) (1.04, 3.5815329775257605) (1.05, 5.134256988325328) (1.02, 3.2729138075301223) (1.01, 3.9597308728082266) (1.02, 3.563058368109108) (1.03, 5.665321903145862) (1.06, 3.3745715683827004) (1.02, 4.606947583276563) (1.03, 3.40437063501838) (1.04, 3.1279301971765454) (1.03, 5.32994505204196) (1.14, 3.297066564214165) (1.04, 3.0116707314084783) (1.06, 2.919713749624693) (1.03, 3.262946960038295) (1.08, 2.6993462184059154) (1.04, 5.12180107619836) (1.01, 4.7733766118765475) (1.02, 3.3729562809534106) (1.05, 3.5274874986227025) (1.07, 2.9590404228510754) (1.05, 3.6663103735220504) (1.04, 3.899328736506942) (1.04, 3.974360851880727) (1.04, 4.38892724394769) (1.02, 5.170735716732145) (1.11, 2.5337531604050714) (1.07, 3.4940979903745957) (1.01, 4.193672311751656) (1.03, 5.6177106775306624)
};
\addlegendentry{visitall-multidimensional}

\draw[color=black] (axis cs:1e-70,1) -- (axis cs:1e70,1);

\end{axis}
\end{tikzpicture}
\caption{Per domain average rule skew against relative speed from Tyr-1 to Tyr-8. The horizontal line depicts no speedup.}
\label{fig:skew-speedup}
\end{figure}

We now inspect the practical speedups observed from Tyr-1 to Tyr-8. Figure~\ref{fig:skew-speedup} compares, for a subset of tasks, the \defined{rule skew}, defined as the ratio of the maximum to the median total wall-clock rule execution time across all rules, and the \defined{speedup}, defined as the ratio of the total wall-clock execution time of Tyr-1 to that of Tyr-8. Most speedups fall in the range of 2 to 4, and fewer in the range of 4 to 7. For the domains in Table~\ref{tab:overview} where Tyr shows speedup improvement, i.e., Visitall-Multidim., Pipesworld-Tankage, Logistics-Large, and Rovers-Large, we make the following observations. In Visitall, rule skew is close to 1, and we observe speedups of at least 2 and up to 6. In the others, we still see significant speedups, but lower than in Visitall, motivating better load balancing via grounding parallelization. Similarly, runtime improvements occur in Rovers-Large, where Tyr-1 already solves all tasks. In Blocksworld-Large, where rule skew is consistently high, we observe no coverage improvements.

\subsection{Overall Results}

\newcommand{\numtasks}[1]{\small{(#1)}}
\setlength{\tabcolsep}{3pt}

\begin{table*}[ht]
  \centering
  \begin{tabular}{l l rr rr rr rr rr rr rr}
    \toprule
    & & \multicolumn{4}{c}{Baselines}
      & \multicolumn{10}{c}{Ours}                      \\
      \cmidrule(l){3-6}
      \cmidrule(lr){7-16}
    & Domain
      & \multicolumn{2}{c}{FD}
      & \multicolumn{2}{c}{PL}
      & \multicolumn{2}{c}{Tyr-1}
      & \multicolumn{2}{c}{Tyr-2}
      & \multicolumn{2}{c}{Tyr-4}
      & \multicolumn{2}{c}{Tyr-8}
      & \multicolumn{2}{c}{Tyr-8-2} \\
      \cmidrule(l){3-4}
      \cmidrule(l){5-6}
      \cmidrule(l){7-8}
      \cmidrule(l){9-10}
      \cmidrule(l){11-12}
      \cmidrule(lr){13-14}
      \cmidrule(l){15-16}
    & & S   & T
      & S   & T
      & S   & T
      & S   & T
      & S   & T
      & S   & T
      & S   & T \\

    \midrule

    & \textbf{ASA Summary \numtasks{990}}
      & \textbf{622} & 0.96
      & 487          & 4.73
      & 494          & 2.94
      & 506          & 2.25
      & 509          & 2.44
      & 503          & 2.99
      & 506          & 2.97           \\

    \midrule

    {\multirow{9}{*}{\rotatebox[origin=c]{90}{Hard-To-Ground}}}

    & Blockworld-Large \numtasks{40}
      & 12           & 9.41
      & 7            & 36.86
      & 39           & 0.73
      & \textbf{40}  & 0.75
      & 39           & 0.82
      & 39           & 0.80
      & \textbf{40}  & 0.77           \\
    & Childsnack-Large \numtasks{144}
      & 113          & 1.38
      & 78           & 4.82
      & \textbf{124} & 1.51
      & 123          & 1.52
      & 123          & 1.64
      & 122          & 1.62
      & 123          & 1.61           \\
    & Genome-Edit-Distance \numtasks{312}
      & \textbf{312} & 1.89
      & 306          & 5.07
      & 303          & 3.71
      & 308          & 2.86
      & 309          & 2.81
      & 311          & 3.57
      & 311          & 3.52           \\
    & Labyrinth \numtasks{40}
      & \textbf{12}  & --
      & 0            & --
      & 7            & --
      & 8            & --
      & 8            & --
      & 8            & --
      & 10           & --           \\
    & Logistics-Large \numtasks{40}
      & 36           & 207.31
      & \textbf{40}  & 1.89
      & \textbf{40}  & 3.95
      & \textbf{40}  & 3.80
      & \textbf{40}  & 3.87
      & \textbf{40}  & 3.82
      & \textbf{40}  & 3.82           \\
    & Organic-Synthesis \numtasks{56}
      & 21           & 3.78
      & \textbf{48}  & 0.04
      & 39           & 0.14
      & 39           & 0.14
      & 39           & 0.14
      & 39           & 0.17
      & 39           & 0.16           \\
    & Pipesworld-Tankage \numtasks{50}
      & 19           & 5.66
      & 26           & 0.90
      & 28           & 2.07
      & 33           & 1.58
      & \textbf{36}  & 1.29
      & \textbf{36}  & 1.23
      & 35           & 1.21           \\
    & Rovers-Large \numtasks{40}
      & 14           & 99.34
      & \textbf{40}  & 15.89
      & \textbf{40}  & 12.68
      & \textbf{40}  & 8.93
      & \textbf{40}  & 5.96
      & \textbf{40}  & 5.56
      & \textbf{40}  & 5.42           \\
    & Visitall-Multidim. \numtasks{180}
      & 66           & 16.22
      & 134          & 0.67
      & 131          & 0.46
      & 134          & 0.37
      & 139          & 0.31
      & 142          & 0.32
      & \textbf{143} & 0.29           \\
    \midrule

    & \textbf{HTG \numtasks{902}}
      & 605          & 10.72
      & 679          & 2.21
      & 751          & 1.17
      & 765          & 0.96
      & 772          & 0.90
      & 776          & 0.95
      & \textbf{781} & 0.95           \\

    \midrule
    & \textbf{Total Summary \numtasks{1892}}
      & 1227          & 1.64
      & 1166          & 3.50
      & 1245          & 2.56
      & 1271          & 1.94
      & 1281          & 1.97
      & 1279          & 2.30
      & \textbf{1287} & 2.29           \\

    \bottomrule
  \end{tabular}
  \caption{
      Comparison of different planners using lazy greedy best-first search with the FF heuristic. We compare the ground implementation in Fast Downward (FD), the lifted implementation in Powerlifted (PL), and our lifted implementation with N cores (Tyr-N). For each domain, we report the number of solved tasks (S), and, for commonly solved tasks, the geometric mean of the total time in seconds (T). For each domain, the lifted planners with the highest number of solved tasks are shown in bold.}
      \label{tab:overview}
\end{table*}

Table~\ref{tab:overview} shows that no single planner dominates uniformly across all domains. Instead, the results reveal a clear split between benchmark sets and planner types. On the Autoscale Agile benchmark set, FD achieves the highest coverage, indicating that grounding is often not the main bottleneck on these tasks and that the mature ground-search implementation remains highly competitive. On the Hard-To-Ground benchmark set, however, the lifted planners clearly outperform FD in coverage, which is consistent with the intended structure of these benchmarks: avoiding full grounding becomes advantageous when the grounded representation is difficult to construct or store.

Within the lifted planners, Tyr improves most consistently on HTG, where Tyr-8-2 solves 781 tasks compared to 679 for PL and 605 for FD. This advantage is not uniform across domains, suggesting that performance depends on the interaction between the lifted representation, the generated Datalog program, and the structure of the search space. The two lifted planners also make different design choices: PL applies transformations to the Datalog program to improve join-based evaluation, whereas Tyr avoids introducing additional relations in order to reduce synchronization overhead. For example, Tyr performs particularly well on Blocksworld-Large, whereas PL remains stronger on Organic-Synthesis. We therefore interpret these domain-level differences primarily as complementary strengths rather than as evidence of a uniform dominance relation.

The runtime results on HTG further indicate that Tyr benefits from rule-level parallelism on domains with sufficiently expensive Datalog evaluation. Runtime decreases almost consistently as the number of rule-level workers increases, for example, on Rovers-Large. However, the additional grounding-level parallelism in Tyr-8-2 yields only modest gains. This suggests that, for the current implementation and benchmark set, the dominant source of exploitable parallelism is coarse rule-level evaluation rather than splitting individual rules into multiple grounding tasks. This limitation is specific to the simple grounding-level splitting strategy evaluated here; it does not preclude stronger gains from finer-grained Datalog parallelization, such as the parallel relation operations used in systems like Soufflé~\cite{jordan-et-al-iccav2016}.

\section{Conclusions}

We presented a lifted planner based on semi-naive bottom-up Datalog evaluation with two levels of parallelism: rule-level parallelism and grounding parallelism. Our approach uses a $\ddelta$-edge-anchored clique-enumeration scheme to generate the ground rule instances required for semi-naive evaluation without duplicates. The empirical results indicate that this architecture scales well with the number of cores, typically achieving speedups of 2--4x and up to 6x on 8 cores, while our parallel-fraction analysis suggests a speedup limits of about 13x. In addition, the resulting planner substantially improves overall benchmark coverage, establishing a new state of the art for lifted FF-style planning in our evaluation. Future work includes better load-balancing strategies and combining our approach with hash-distributed search to scale the method horizontally and further improve performance.

\section{Acknowledgements}

This work was supported by the Swedish Foundation for Strategic Research and the Wallenberg AI, Autonomous Systems and Software Program (WASP) funded by the Knut and Alice Wallenberg Foundation. The computations were enabled by resources provided by the National Academic Infrastructure for Supercomputing in Sweden (NAISS), partially funded by the Swedish Research Council through grant agreement no.\ 2022-06725.

\bibliography{abbrv,literatur,crossref,extra}

\end{document}